\documentclass{article}
\usepackage[left=2cm,right=2cm]{geometry}

\usepackage{graphicx}
\usepackage[utf8]{inputenc} 
\usepackage{authblk}

\usepackage[T1]{fontenc}    
\usepackage{hyperref}       
\usepackage{url}            
\usepackage{booktabs}       
\usepackage{microtype}      
\usepackage{xcolor}         
\usepackage{mathtools}
\usepackage{nicefrac}
\usepackage{multirow}
\usepackage{algorithmic}
\usepackage[ruled,vlined]{algorithm2e}
\usepackage{amsmath}
\usepackage{amssymb}
\usepackage{amsfonts}
\usepackage{amsthm}
\usepackage{wrapfig}
\usepackage{subcaption}
\newtheorem{theorem}{Theorem}
\newtheorem{lemma}{Lemma}

\newtheorem{corollary}{Corollary}

\newtheorem{remark}{Remark}
\usepackage{booktabs}
\usepackage{varwidth}
\newenvironment{cellvarwidth}
  {\begin{varwidth}{\textwidth}}  
  {\end{varwidth}}                
\title{TRUST: Test-time Resource Utilization for Superior Trustworthiness}
\date{}

\author[1]{Haripriya Harikumar \thanks{Currently with University of Manchester but the majority of the work was done at Applied Artificial Intelligence Institute, Deakin University, Geelong, Australia.}\thanks{Correspondence to haripriya.harikumar@manchester.ac.uk}}
\author[2]{Santu Rana}
\affil[1]{Department of Computer Science, University of Manchester, Manchester, UK}
\affil[2]{Applied Artificial Intelligence Institute, Deakin University, Geelong, Australia}
\begin{document}

\maketitle

\begin{abstract}
Standard uncertainty estimation techniques, such as dropout, often struggle to clearly distinguish reliable predictions from unreliable ones. We attribute this limitation to noisy classifier weights, which, while not impairing overall class-level predictions, render finer-level statistics less informative. To address this, we propose a novel test-time optimization method that accounts for the impact of such noise to produce more reliable confidence estimates. This score defines a monotonic subset-selection function, where population accuracy consistently increases as samples with lower scores are removed, and it demonstrates superior performance in standard risk-based metrics such as AUSE and AURC. Additionally, our method effectively identifies discrepancies between training and test distributions, reliably differentiates in-distribution from out-of-distribution samples, and elucidates key differences between CNN and ViT classifiers across various vision datasets.
\end{abstract}

\section{Introduction}

Deep learning-based vision classifiers have shown remarkable performance
across various domains \cite{zhou2023deep,zhu2023unsupervised,gheisari2023deep}.
However, even highly accurate models can make inexplicable errors
and exhibit unwarranted confidence when confronted with data that
diverges from the training distribution. Such behavior is unacceptable
in high-stakes settings, such as in diagnostic systems based on
medical image classification \cite{flosdorf2024skin}. Accurately
estimating prediction confidence scores is essential to identifying
when a model might make mistakes, enabling actions like rejecting the
prediction or involving experts in human-machine teaming \cite{mosqueira2023human}. 

Confidence estimation in deep learning models is commonly achieved
through methods like softmax probabilities \cite{pearce2021understanding},
Monte Carlo dropout \cite{kingma2015variational,gal2016dropout},
ensemble methods \cite{lakshminarayanan2017simple}, and Bayesian
neural networks \cite{kononenko1989bayesian}. These techniques attempt
to quantify uncertainty by repurposing model outputs or generating
multiple predictions. However, except for basic softmax-based measures
\cite{pearce2021understanding}, most of these approaches require
specialized training techniques, and methods like Bayesian neural
networks are often impractical for large-scale applications due to
their high computational demands. Additionally, all of them struggle
in providing reliable confidence estimates despite correctly predicting
class labels. We hypothesize that this is due to their assumption
of an ideal, perfect model, a requirement seldom met in real-world
scenarios. In most practical training runs, the training loss may
not fully converge, leading to noisy model weights. Even with zero
training loss, overparameterized models may retain noisy parameters
that become activated with unseen data during testing, adversely affecting
the confidence estimation process. LogitNorm \cite{wei2022mitigating},
a more recent approach to control the unnecessary overconfidence by
limiting the norm of the logit output, is a step in the right direction.
However, it requires re-training based on the proposed loss function
and it also lacks any theoretical underpinning.

To address this issue, we first show how noise in classifier weights distorts confidence estimation, and then propose a sample-specific, test-time optimization strategy to reduce this noise. In high-dimensional feature spaces, training data tend to lie on a hypersphere and form microclusters, each centered around a mode and well separated from others. This aligns with the insights from Neural Collapse \cite{pan2023towards}, which shows that final-layer representations often collapse to a single point per class. We generalize this observation by allowing classes to collapse across multiple modes. Under this view, the angular distance between a test point and its nearest mode serves as an effective proxy for epistemic uncertainty. We estimate this mode through a lightweight, sample-specific optimization procedure and compute uncertainty as the angular distance to the estimated mode. While prior work such as \cite{jiang2018trust} has shown that distance-based measures can better capture epistemic uncertainty than traditional methods, their approach depends on identifying high-density regions in the training data, making it sensitive to out-of-distribution samples and blind to the effect of weight noise. 

While the noise inherent in a classifier’s weights cannot be effectively made zero, especially in overparameterized settings, we argue that it is possible to mitigate its effect on mode estimation by leveraging additional computation at test time. To demonstrate this, we introduce TRUST (Test-time Resource Utilization for Superior Trustworthiness), a novel reliability score, which simply measures the cosine distance between the test sample and its nearest mode. We observe that TRUST defines a monotonic set function over the test population: as samples are filtered based on higher TRUST scores, overall accuracy consistently improves. Moreover, TRUST outperforms conventional uncertainty quantification techniques on risk-based metrics such as AUSE and AURC. 

Importantly, we also observe that the distributional gap between training and test TRUST scores provides early signals about generalization performance, offering a new lens through which to evaluate a classifier’s suitability for deployment.  In summary, our main contributions are as follows:

\textbf{Analysis of Noisy Model Weights: }We analyze the impact of noisy model weights on confidence estimation, demonstrating why current
confidence scoring methods are often unreliable.

\textbf{Novel Test-Time Approach for Confidence Estimation: }We introduce
a first-of-its-kind approach that leverages test-time computation
to mitigate the effects of noisy weights, improving the reliability
of confidence scores.

\textbf{Introduction of TRUST Score: }We propose a new metric, the
TRUST score, which achieves state-of-the-art performance in identifying
reliable predictions and shows potential in other valuable use cases,
such as detecting out-of-distribution samples, predicting performance
on non-aligned test distributions, and revealing insights into model
behavior.

We conduct extensive analysis in a range of four benchmark datasets
(CIFAR-10, CAMELYON-17, TinyImagenet, and Imagenet) and models, ranging
from simple to state-of-the-art ViT models, demonstrating
the broad applicability and robustness of our approach. \textcolor{blue}{Code is available at \href{https://shorturl.at/Ka77k}{LINK}}.

\section{Related work}

Traditional methods like Bayesian neural networks \cite{kononenko1989bayesian},
dropout-based variational inference \cite{kingma2015variational,gal2016dropout}
focus on epistemic uncertainty \cite{swiler2009epistemic} but are
computationally intensive. Recent methods, including Dirichlet-based
and evidential learning models separate aleatoric \cite{kendall2017uncertainties}
and epistemic uncertainty but still face challenges with noise and
dataset shifts \cite{deng2023uncertainty,mukhoti2023deep,hullermeier2021aleatoric}.
A non-Bayesian approach as proposed in \cite{lakshminarayanan2017simple} use ensemble models, however they rely on pre-trained model variance and adversarial
robustness. Efficient approaches like Deep Deterministic Uncertainty
\cite{mukhoti2023deep} employ single-pass networks with regularized
feature spaces, enabling useful uncertainty estimation in large-scale
applications \cite{chua2023tackling}. A recent work \cite{huang2023look}
provides insights into uncertainty quantification challenges and techniques
for high-dimensional language models, relevant for understanding scalability
and calibration in large-scale deep learning models across domains.

Emerging methods like Density-Aware Evidential Deep Learning \cite{yoon2024uncertainty}
and Fisher Information-based Evidential Learning \cite{deng2023uncertainty}
enhance Out-Of-Distribution (OOD) detection and few-shot performance
by integrating feature-space density and adaptive uncertainty weighting,
respectively, offering resilience under varied data conditions \cite{qu2022improving,hullermeier2021aleatoric}.
RCL \cite{zhu2024rcl} employs a continual learning paradigm for unified
failure detection, while LogitNorm \cite{wei2022mitigating} mitigates
overconfidence by constraining logit magnitudes during training. SIRC
\cite{xia2022augmenting} augments softmax scores for selective classification,
and OpenMix \cite{zhu2023openmix} utilizes outlier transformations
to improve misclassification detection. Confidence calibration methods like \cite{zhu2022rethinking} explore flat minima to enhance failure prediction. Unlike
LogitNorm \cite{wei2022mitigating}, TRUST goes further in employing
test-time computation to identify the effect of noisy weights. TRUST
quantifies test-time epistemic uncertainty via feature-space distances
without altering training or requiring extra data. 

Robust uncertainty estimation under dataset shifts is essential as
traditional calibration methods often degrade in non-i.i.d. conditions
\cite{ovadia2019can}. ODIN \cite{liang2017enhancing} and Generalized-ODIN
\cite{hsu2020generalized} improves OOD detection through input pre-processing
and temperature scaling but requires specific OOD tuning, limiting
flexibility \cite{shafaei2018less}. A recent work in \cite{van2024can},
explored the use of synthetic test data to better evaluate model performance
under shifts by simulating diverse scenarios, enhancing subgroup and
shift evaluation where real data may be limited. Thus OOD identification
is still a major unsolved problem \cite{gawlikowski2023survey}. While
human-in-the-loop \cite{mosqueira2023human,shergadwala2022human}
frameworks improve reliability in settings like healthcare \cite{bakken2023ai,zhou2023deep}
or finance \cite{heaton2017deep}, real-time human input is often
impractical and thus they need to be called only when it's absolutely
necessary. 

\section{Proposed approach}

We denote the training data distribution by $P_{X\times Y}$ and the
test data distribution by $Q_{X\times Y}$. We refer to the neural
network trained on $P_{X\times Y}$ as $f_{\theta}:\mathbb{R}^{\text{dim}(X)}\rightarrow[0\;1]^{\text{dim}(Y)}$,
where $\mathbb{R}$ represents the real-number line and $\theta$
being the trainable weights. For a sufficiently large model relative
to the dataset complexity, which is generally the case for modern,
overparameterized deep models, we can expect nearly all training data
to be classified correctly with near-perfect confidence.
\begin{wrapfigure}{R}{0.6\columnwidth}%
\begin{centering}
\includegraphics[width=0.3\columnwidth]{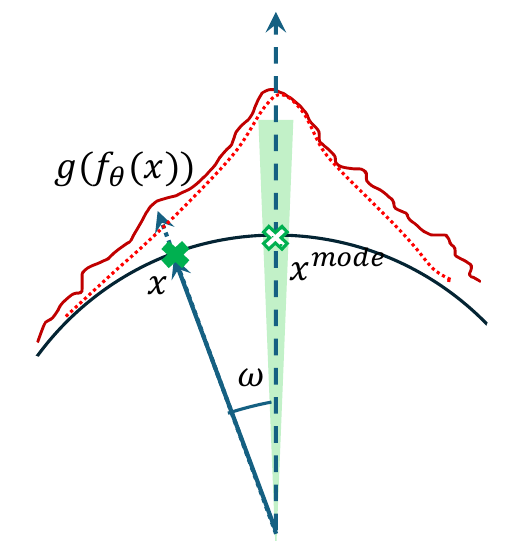}
\par\end{centering}
    \caption{Geometrical Intuition of Our Approach: Traditional methods rely on
the noisy approximation (solid red line) of the ground truth score
function (dotted red line) for uncertainty quantification. In contrast,
our approach directly computes the angular distance ($\cos(\omega$)
) between $x$ and its nearest mode $x^{\text{mode}}$. The green
shaded region illustrates the estimation uncertainty of $x^{\text{mode}}$,
which diminishes with additional test-time computation, enabling highly
precise estimation of $\cos(\omega)$. }\label{fig:Geometrical-intuition-of}
\vspace{-10bp}
\end{wrapfigure} 

In such a scenario, considering the extremely high dimensionality of the feature space induced by deep models, we can safely assume that the data are
spread over the surface of a hypersphere, with islands of micro-clusters dominating the landscape. The larger the model, the smaller the size
of these micro-clusters, and the further apart they become from each
other. Thus, it is not surprising that extremely large models often
show extreme memorization ability \cite{carlini2022quantifying,biderman2024emergent},
and can behave like nearest-neighbor classifiers in the feature space.
Under the assumption of such a topology, where data appear in micro-clusters
that are relatively far from each other, which is more pronounced
in higher-dimensional feature spaces, we can deduce that the distance
of a test data point from the median or mode of its nearest micro-cluster
should be proportional to the classifier's epistemic uncertainty.

Traditional approaches to estimating confidence typically incorporate
an auxiliary function $g:\mathbb{R}^{\text{dim}(Y)^{k}}\rightarrow\mathbb{R}_{\geq0}$,
which uses $k$ outcomes or inferences from $f_{\theta}$ (e.g., using
MC-dropout-based measures) to estimate angular distance or a monotonic
transformation of it. However, because $g$ relies on the outputs
of $f_{\theta}$, it remains vulnerable to noise in the model weights
$\theta$. In the following section, we propose a direct method for
measuring angular distance that bypasses this noise, resulting in
a more accurate estimate of the model’s confidence in a given prediction
(in Fig \ref{fig:Geometrical-intuition-of}).

\subsection{TRUST: Test-time Resource Utilization for Superior Trustworthiness}

In this section, we develop the core methodology for computing the
angular distance between a test sample $x_{\text{test}}\in Q$ and
its nearest mode in the feature space. Identifying the exact mode
of the micro-cluster that $x_{\text{test}}$ belongs to within the
training data is challenging, as it would require mapping out the
entire data manifold, a requirement that is infeasible with large-scale
datasets. Instead, we propose a test-time optimization approach that
projects $x_{\text{test}}$ to its nearest mode (i.e., maximizing
the probability of the predicted class to 1) by introducing only minimal
changes.

This optimization transforms $x_{\text{test}}$ into its nearest cluster
mode $x_{\text{test}}^{\text{mode}}=x_{\text{test}}+\Delta x$ by
solving the following problem:
\vspace{-3bp}
\begin{equation}
\arg\min_{\Delta x}\,\,L(x_{\text{test}}+\Delta x,y_{\text{test}})+\lambda||\Delta x||_{1}\label{eq:optimization}
\end{equation}

where $y_{\text{test}}$ is the predicted class label for $x_{\text{test}}$
by the classifier $f_{\theta}$, and $L(.)$ is the loss function,
typically cross-entropy for classification. The $L_{1}$-norm regularization
term ensures that $\Delta x$ remains sparse, so $x_{\text{test}}$
undergoes minimal modification, reducing the risk of being assigned
to a different micro-cluster during optimization. 
The weight $\lambda$ is usually set to a high value to ensure a sparse solution, and a long optimization should ensure this loss function to reaching very close to optima.

To refine the estimation of $\Delta x$, we increase the softmax temperature
$T$ when computing the softmax score for class $i$:
\vspace{-8bp}
\[
S_{i}(x,T)=\frac{\exp(f_{\theta,i}(x)/T)}{\sum_{j=1}^{k}\exp(f_{\theta,j}(x)/T)}
\]

This temperature amplifies the differences between predictions for
$x_{\text{test}}$ and $x_{\text{test}}^{\text{mode}}$, enhancing
the ability of optimization to fine-tune $\Delta x$ , $k$ is the
number of classes. The TRUST score is then computed as the cosine distance between $x_{\text{test}}$
and $x_{\text{test}}^{\text{mode}}$ in the feature space. Although
deep models offer various feature spaces (e.g., layer-wise or combined
layers), our experiments indicate final layer to be the most reliable.
Algorithm 1 presents the TRUST score computation, and the convergence criterion in Step 3 is based on loss convergence.
\begin{algorithm}
\caption{TRUST Score computation}
\begin{algorithmic}[1]
\STATE \textbf{Input}: Test sample $x_{\text{test}} \in Q$, classifier $f_{\theta}$, target class label $y_{\text{test}}=\arg\max_{y \in \{1,\dots,k\}} f_\theta(x_{\text{test}})$, regularization weight $\lambda$, softmax temperature $T$
\STATE \textbf{Initialize}: Set initial perturbation $\Delta x = 0$
\WHILE{loss convergence criterion is not met}
    \STATE Compute softmax score: $S_{y_{\text{test}}}(x_{\text{test}} + \Delta x, T) = \frac{\exp(f_{\theta,y_{\text{test}}}(x_{\text{test}} + \Delta x) / T)}{\sum_{j=1}^{k} \exp(f_{\theta,j}(x_{\text{test}} + \Delta x) / T)}$
    \STATE Define objective: $\mathcal{L} = L(x_{\text{test}} + \Delta x, y_{\text{test}}) + \lambda \|\Delta x\|_{1}$
    \STATE Update $\Delta x$ by minimizing $\mathcal{L}$
\ENDWHILE
\STATE \textbf{Obtain Nearest Mode:} $x_{\text{test}}^{\text{mode}} = x_{\text{test}} + \Delta x$
\STATE \textbf{Compute TRUST Score:} 
$\text{TRUST}(x_{\text{test}}) = \frac{ f_{\theta}^{l}(x_{\text{test}})^\top f_{\theta}^{l}(x_{\text{test}}^{\text{mode}}) }{ \| f_{\theta}^{l}(x_{\text{test}}) \|_{2} \cdot \| f_{\theta}^{l}(x_{\text{test}}^{\text{mode}}) \|_{2} }$
\STATE \textbf{Output}: TRUST Score for $x_{\text{test}}$
\end{algorithmic}
\end{algorithm}

\subsection{Mathematical analysis }

In this section first we revisit some of the well-known results related
to high-dimensional geometry that served as the primary motivation
of our work. Next, we deduce the effect of noisy weights on the classifier
derived confidence score and show how our method offers superior robustness.
\begin{theorem}
(\textbf{Concentration of Measure on the Hypersphere}):
\end{theorem}

Let $x$ be a random vector in $\mathbb{R}^{d}$, where each component
$x_{i}$ is independently drawn from a distribution with mean 0 and
variance $\sigma^{2}$. Define the Euclidean norm of $x$ as $||x||=\sqrt{\sum_{i=1}^{d}x_{i}^{2}}$.
\begin{enumerate}
\item \textbf{Norm Concentration: }As the dimensionality $d\to\infty$ ,
the Euclidean norm $||x||$ concentrates around $\sqrt{d}\sigma$
, meaning that for any small $\epsilon>0$ , $Pr\left(\left|||x||-\sqrt{d}\sigma\right|<\epsilon\sqrt{d}\sigma\right)\to1$.
\item \textbf{Surface Concentration:} Consequently, as $d$ grows large,
the points $x$ become increasingly concentrated near the surface
of a hypersphere with radius $\sqrt{d}\sigma$ centered at the origin.
Specifically, the probability that a randomly chosen point lies within
a thin shell of radius $\sqrt{d}\sigma\pm\epsilon$ approaches 1 as
$d\to\infty$ .
\end{enumerate}
\begin{proof}
Well known.
\end{proof}
This well-known theorem demonstrates that in high-dimensional spaces,
data tends to concentrate near the surface of a hypersphere.
\begin{theorem}
Let n points be independently and uniformly distributed on the surface
of a $d$-dimensional unit hypersphere. Then, the expected minimum
value of $\cos(\omega)$ , where $\omega$ is the angle between any
pair of points, is approximately given by $E[\cos(\omega_{\text{min}})]\approx-\sqrt{\frac{2\ln n}{d}}$, where $\cos(\omega_{\text{min}})$ denotes the minimum cosine value
over all pairs of points under the assumption of both $n$ and $d$
being large.
\end{theorem}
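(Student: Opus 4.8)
The plan is to reduce the two-point geometry to the distribution of a single inner product, pass to a Gaussian approximation valid for large $d$, and then apply classical extreme-value asymptotics for the minimum of many nearly-Gaussian variables.

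First I would exploit the rotational invariance of the uniform measure on the unit sphere $S^{d-1}$. For any fixed pair of unit vectors $u,v$ we have $\cos\omega = \langle u,v\rangle$, and by invariance I may condition on $u=e_1$, so that $\cos\omega$ equals the first coordinate of a uniform point on $S^{d-1}$. Its density on $[-1,1]$ is proportional to $(1-t^2)^{(d-3)/2}$, which is symmetric with mean $0$ and variance $1/d$. Next I would establish the Gaussian limit: as $d\to\infty$, $\sqrt{d}\,\langle u,v\rangle \Rightarrow \mathcal{N}(0,1)$, a direct consequence of concentration of measure (Theorem 1) or of Laplace's method applied to the $(1-t^2)^{(d-3)/2}$ density. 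In particular the left tail obeys $\Pr\!\left(\cos\omega < -s/\sqrt{d}\right)\approx \Phi(-s)$, so each pairwise cosine behaves like a centered Gaussian with standard deviation $1/\sqrt{d}$.

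I would then invoke extreme-value theory. Writing $M$ for the number of competing pairs and modeling the pairwise cosines as (approximately) independent standard Gaussians scaled by $1/\sqrt{d}$, the classical asymptotic $E[\min_{1\le i\le M} Z_i]\approx -\sqrt{2\ln M}$ for i.i.d.\ $Z_i\sim\mathcal{N}(0,1)$ yields $E[\cos\omega_{\min}]\approx -\sqrt{2\ln M}/\sqrt{d}$. Matching this to the stated target $-\sqrt{2\ln n/d}$ forces the effective count to scale as $M\asymp n$.

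The main obstacle is precisely the dependence structure together with the constant inside the square root. The $\binom{n}{2}$ pairwise inner products are pairwise uncorrelated (conditioning on a shared endpoint kills the covariance) yet not jointly independent, so the extreme-value step is only heuristic; a rigorous version would require a Chen--Stein/Poisson approximation for the count of near-antipodal pairs plus a truncation argument to pass from the tail estimate to control of the expectation. More delicate still is the constant: taking the honest all-pairs count $M=\binom{n}{2}\asymp n^2$ gives $\sqrt{2\ln(n^2)}=\sqrt{4\ln n}$, i.e.\ $-2\sqrt{\ln n/d}$, whereas the stated $-\sqrt{2\ln n/d}$ corresponds to an \emph{effective} number of competing events of order $n$ rather than $n^2$. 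The crux of any faithful proof is therefore to make this effective-count assumption explicit and to argue why, for the purposes of the leading-order estimate used in the motivation, treating $M\asymp n$ is the operative regime.
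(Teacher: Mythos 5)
Your route is the same as the paper's: reduce $\cos\omega$ to a single inner product, use the Gaussian approximation $\cos\omega\sim\mathcal{N}(0,1/d)$ for large $d$, and apply extreme-value asymptotics to the $\binom{n}{2}$ pairwise cosines treated as independent. The paper's supplementary proof does exactly this, setting $P(\cos(\omega_{\min})\geq t)\approx[1-\Phi(t\sqrt{d})]^{n^2/2}\approx e^{-1}$ and inverting the Gaussian tail. You go somewhat further than the paper in two respects: you identify the exact marginal density $(1-t^2)^{(d-3)/2}$ rather than appealing only to the CLT, and you explicitly flag that the pairwise cosines are uncorrelated but not independent, so the product form of the survival function is itself only heuristic (the paper uses it without comment).

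Your observation about the constant is correct and identifies a real inconsistency in the paper's own derivation. From $\Phi(t\sqrt{d})\approx 2/n^2$ and the quantile asymptotic $\Phi^{-1}(p)\approx-\sqrt{2\ln(1/p)}$, one gets $t\sqrt{d}\approx-\sqrt{2\ln(n^2/2)}\approx-2\sqrt{\ln n}$, i.e.\ $E[\cos(\omega_{\min})]\approx-2\sqrt{\ln n/d}$; the paper's step from $\Phi(t\sqrt{d})\approx 2/n^2$ to $t\sqrt{d}\approx-\sqrt{2\ln n}$ silently replaces $\ln(n^2/2)$ by $\ln n$, losing a factor of $\sqrt{2}$. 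As you note, the stated $-\sqrt{2\ln n/d}$ would follow only if the effective number of competing pairs were of order $n$ rather than $n^2$. For the qualitative use the paper makes of the result (near-orthogonality as $d\to\infty$, since $\sqrt{\ln n/d}\to0$ either way) the discrepancy is immaterial, but as a statement of the leading-order constant your version, not the paper's, is the one that follows from the all-pairs count.
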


\begin{proof}
The proof can be obtained by first observing that $cos(\omega)$ has
an approximate distribution of $\mathcal{N}(0,\frac{1}{d})$ (see
the corresponding Lemma in the supplementary) and then using the Extreme
Value Theory for $n\to\infty$ we can prove the result (see supplementary
for details).
\end{proof}
\begin{corollary}
In an extreme high-dimensional feature space any two points are nearly-orthogonal.
\end{corollary}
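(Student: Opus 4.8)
The plan is to obtain this as an immediate qualitative consequence of the preceding lemma and Theorem 2, reading ``nearly-orthogonal'' as the statement that $\cos(\omega)\to 0$ (equivalently $\omega\to\pi/2$) uniformly over all pairs as $d\to\infty$. First I would recall from the lemma invoked in the proof of Theorem 2 that, for two independent uniform points on the unit hypersphere, $\cos(\omega)$ has the approximate distribution $\mathcal{N}(0,\tfrac{1}{d})$. Since the variance $1/d$ vanishes, a single random pair already concentrates: by Chebyshev's inequality, for any fixed $\epsilon>0$,
\[
\Pr\big(|\cos(\omega)|>\epsilon\big)\le \frac{1}{d\,\epsilon^{2}}\to 0
\quad\text{as } d\to\infty,
\]
so a typical pair is nearly orthogonal.

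The real content is that ``any two points'' is a uniform, worst-case claim over all $\binom{n}{2}$ pairs, not a statement about one typical pair. To upgrade the pointwise concentration to all pairs simultaneously, I would invoke the extreme-value estimate already established in Theorem 2: the minimum cosine over all pairs satisfies $E[\cos(\omega_{\min})]\approx-\sqrt{2\ln n/d}$, and by the symmetry of the centered Gaussian the maximum cosine satisfies $E[\cos(\omega_{\max})]\approx+\sqrt{2\ln n/d}$. Hence the largest absolute cosine across all pairs is of order $\sqrt{\ln n/d}$. Provided the dimension grows faster than the logarithm of the sample size, i.e.\ $\ln n=o(d)$, this bound vanishes, forcing $\max|\cos(\omega)|\to 0$ and therefore $\omega\to\pi/2$ for every pair.

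The main obstacle is precisely this uniformity: near-orthogonality of one random pair follows trivially from the shrinking variance, but the corollary asserts it for all pairs at once, which requires controlling the extreme over $\sim n^{2}$ pairwise cosines rather than a single one. This is exactly what the extreme-value argument behind Theorem 2 supplies, and it also pins down the regime in which the statement is meaningful, namely the ``extreme high-dimensional'' setting where $d\gg\ln n$. I would conclude by observing that in this regime both the typical and the worst-case angles collapse onto $\pi/2$, which is the precise sense in which any two points become nearly orthogonal, justifying the micro-cluster picture used elsewhere in the paper.
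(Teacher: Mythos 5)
Your proposal is correct and follows essentially the same route as the paper, which disposes of the corollary in a single line by citing Theorem 2's estimate $E[\cos(\omega_{\min})]\approx-\sqrt{2\ln n/d}\to 0$ as $d\to\infty$. Your version is in fact more careful than the paper's: you observe that a uniform ``any two points'' claim also requires controlling the maximum cosine (which you get by symmetry of the centered Gaussian), and you make explicit the regime $\ln n=o(d)$ in which the conclusion actually holds, neither of which the paper's one-line proof addresses.
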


\begin{proof}
As per the previous theorem as $d\to\infty$ $E[\cos(\omega_{\text{min}})]\to0$.
\end{proof}
The corollary above explains why we expect micro-clusters of training
data in the feature spaces of large, deep models to be well-separated.
This separation enables us to focus primarily on the nearest micro-cluster
to a test data point, which then predominantly influences uncertainty
quantification.

\subsubsection{Error analysis }

In this section first we derive the error probability when a noisy
scoring function is used to sort a list of items (i.e., stratification
in our case) then we show that our method provide exceptional noise
robustness.
\begin{lemma}
Let $x_{1},x_{2},\ldots,x_{n}$ be a set of $n$ items, each associated
with a true score $s_{i}$ for $i=1,2,\dots,n$ . Suppose the observed
score $\tilde{s_{i}}$ of each item $x_{i}$ is corrupted by Gaussian
noise $\varepsilon_{i}$ with mean zero and standard deviation $\sigma$
, such that:
\vspace{-3bp}
\[
\tilde{s_{i}}=s_{i}+\varepsilon_{i},\quad\text{where }\varepsilon_{i}\sim\mathcal{N}(0,\sigma^{2})
\]

Let $\Delta s_{ij}=s_{i}-s_{j}$ represent the true difference in
scores between items $i$ and $j$ , and let $\sigma_{ij}^{2}=2\sigma^{2}$
denote the variance of the difference $\Delta\tilde{s_{ij}}=\tilde{s_{i}}-\tilde{s_{j}}$
due to noise. Define a sorting error to occur if the observed ordering
based on $s_{i}{\prime}$ differs from the true ordering based on
$s_{i}$. Then, the probability $P_{ij}$ of a sorting error between
items $i$ and $j$ (i.e., $\tilde{s_{i}}<\tilde{s_{j}}$ when $s_{i}>s_{j}$)
is given by:
\vspace{-3bp}
\[
P_{ij}=P(\tilde{s_{i}}<\tilde{s_{j}})=1-\Phi\left(\frac{\Delta s_{ij}}{\sqrt{2}\sigma}\right)
\]
where $\Phi$ is the cumulative distribution function of the standard
normal distribution.
\end{lemma}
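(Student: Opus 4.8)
The plan is to reduce the two-item sorting question to a single statement about the sign of one Gaussian random variable, namely the observed score gap $\Delta\tilde{s}_{ij}=\tilde{s}_i-\tilde{s}_j$. The logic of a sorting error is that, although the truth has $s_i>s_j$, the noise flips the observed order so that $\tilde{s}_i<\tilde{s}_j$; equivalently, $\Delta\tilde{s}_{ij}<0$. So the entire computation amounts to finding the law of $\Delta\tilde{s}_{ij}$ and evaluating its CDF at $0$.

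First I would substitute the noise model to write $\Delta\tilde{s}_{ij}=(s_i+\varepsilon_i)-(s_j+\varepsilon_j)=\Delta s_{ij}+(\varepsilon_i-\varepsilon_j)$. Since $\varepsilon_i$ and $\varepsilon_j$ are independent and each $\mathcal{N}(0,\sigma^2)$, their difference is again Gaussian with mean $0$ and variance $\sigma^2+\sigma^2=2\sigma^2$, which is exactly the quantity $\sigma_{ij}^2$ introduced in the statement. Hence $\Delta\tilde{s}_{ij}\sim\mathcal{N}(\Delta s_{ij},\,2\sigma^2)$. Next I would standardize: writing $Z=(\Delta\tilde{s}_{ij}-\Delta s_{ij})/(\sqrt{2}\,\sigma)\sim\mathcal{N}(0,1)$, the error event $\{\Delta\tilde{s}_{ij}<0\}$ becomes $\{Z<-\Delta s_{ij}/(\sqrt{2}\,\sigma)\}$. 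Therefore $P_{ij}=\Phi\!\left(-\Delta s_{ij}/(\sqrt{2}\,\sigma)\right)$, and applying the symmetry identity $\Phi(-t)=1-\Phi(t)$ yields the claimed form $P_{ij}=1-\Phi\!\left(\Delta s_{ij}/(\sqrt{2}\,\sigma)\right)$.

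This argument is essentially a routine Gaussian calculation, so there is no genuinely hard step; the only places demanding care are bookkeeping ones. I would make sure to record that independence of the two noise terms is what turns the two variances into an additive $2\sigma^2$ (if the noises were correlated the denominator would change), and I would be careful with the direction of the inequality so that the standardization produces $\Phi$ of a negative argument rather than a positive one. The final use of $\Phi(-t)=1-\Phi(t)$ is what puts the answer in the stated orientation, making transparent the intuitive monotonicity: the error probability decreases as the true gap $\Delta s_{ij}$ grows and increases with the noise level $\sigma$.
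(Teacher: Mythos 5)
Your proposal is correct and follows essentially the same route as the paper's proof: express $\Delta\tilde{s}_{ij}$ as $\Delta s_{ij}+(\varepsilon_i-\varepsilon_j)$, use independence to get variance $2\sigma^2$, standardize, and apply $\Phi(-t)=1-\Phi(t)$. No meaningful differences to report.
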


\begin{proof}
Straightforward and provided in supplementary.
\end{proof}
Next, we show that when we use cosine distance between $x_{\text{test}}$
and its estimated nearest mode $x_{\text{test}}^{\text{mode}}$ then
the probability of making sorting error due to same level of Gaussian
noise in the estimation is upper bounded by the probability of making
sorting error when the score function has the same level of noise.
\begin{theorem}
Let s$=\cos(\omega)$ be the cosine similarity between a test data
point $x_{\text{test}}$ and its nearest mode $x_{\text{test}}^{\text{mode}}$
, with angle $\omega$ between them. Suppose Gaussian noise $\delta\theta\sim\mathcal{N}(0,\sigma_{\omega}^{2})$
is added to $\omega$, resulting in a noisy score $s{\prime}=\cos(\omega+\delta\omega)$.
For an equivalent score function $s$ with direct Gaussian noise $\delta\omega\sim\mathcal{N}(0,\sigma_{\omega}^{2})$
, the probability $P_{\text{cos}}$ of a sorting error in the cosine
distance scenario is upper-bounded by the probability $P_{\text{direct}}$
of a sorting error in the direct noise scenario:
\vspace{-3bp}
\[
P_{\text{cos}}\leq P_{\text{direct}}
\]
where the bound holds because $\sigma_{s}^{2}=\sin^{2}(\omega)\cdot\sigma_{\omega}^{2}\leq\sigma_{\omega}^{2}$,
with equality when $\omega=\frac{\pi}{2}$ .
\end{theorem}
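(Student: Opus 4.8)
The plan is to reduce the cosine-noise scenario to the setting of the preceding Lemma by first computing how the angular noise $\delta\omega$ propagates to an effective noise on the score $s=\cos\omega$, and then invoking the fact that the sorting-error probability from the Lemma is monotonically decreasing in the noise level. Because the Lemma already expresses $P_{ij}$ as a decreasing function of $\sigma$ through the standard normal CDF $\Phi$, once I show that the cosine map \emph{attenuates} the noise variance, the inequality $P_{\text{cos}}\le P_{\text{direct}}$ follows immediately.

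First I would propagate $\delta\omega$ through the cosine map with a first-order Taylor (delta-method) expansion about $\omega$,
\[
s' = \cos(\omega+\delta\omega) = \cos\omega - \sin(\omega)\,\delta\omega + O(\delta\omega^{2}).
\]
Neglecting the quadratic remainder, the induced score perturbation $\delta s = s'-s \approx -\sin(\omega)\,\delta\omega$ is zero-mean Gaussian, and scaling an $\mathcal{N}(0,\sigma_\omega^{2})$ variable by the constant $-\sin\omega$ multiplies its variance by $\sin^{2}\omega$, giving
\[
\sigma_s^{2} = \sin^{2}(\omega)\,\sigma_\omega^{2}.
\]
Since $0\le\sin^{2}\omega\le 1$, this yields $\sigma_s^{2}\le\sigma_\omega^{2}$ with equality exactly when $\omega=\pi/2$, which is the variance comparison asserted in the statement.

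Next I would apply the Lemma to both scenarios with a common true gap $\Delta s_{ij}=s_i-s_j>0$ (without loss of generality, since a sorting error is defined relative to the true order $s_i>s_j$). Treating the per-item noise standard deviation as $\sigma_\omega$ in the direct case and $\sigma_s=|\sin\omega|\,\sigma_\omega$ in the cosine case, the Lemma gives
\[
P_{\text{direct}} = 1-\Phi\!\left(\frac{\Delta s_{ij}}{\sqrt{2}\,\sigma_\omega}\right),\qquad
P_{\text{cos}} = 1-\Phi\!\left(\frac{\Delta s_{ij}}{\sqrt{2}\,\sigma_s}\right).
\]
Because $\sigma_s\le\sigma_\omega$ and $\Delta s_{ij}>0$, the argument of $\Phi$ in the cosine case is at least as large; monotonicity of $\Phi$ (equivalently, that $1-\Phi$ is decreasing) then delivers $P_{\text{cos}}\le P_{\text{direct}}$, with equality only at $\omega=\pi/2$.

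The main obstacle is making the linearization rigorous: $\cos(\omega+\delta\omega)$ is a nonlinear transform of a Gaussian and hence not exactly Gaussian, so both the variance identity $\sigma_s^{2}=\sin^{2}(\omega)\sigma_\omega^{2}$ and the Gaussian model for $\delta s$ used by the Lemma hold only to leading order in $\sigma_\omega$. I would control this by noting that the exact moments are available in closed form -- for instance $E[\cos(\omega+\delta\omega)]=\cos(\omega)\,e^{-\sigma_\omega^{2}/2}$, and the exact variance is likewise an explicit function of $\sigma_\omega^{2}$ -- so the corrections to the leading-order variance are $O(\sigma_\omega^{2})$ and the inequality is preserved in the small-noise regime, which is precisely the regime of accurate mode estimation targeted by TRUST. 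A secondary point of care is that two items generally have distinct angles $\omega_i,\omega_j$; then the variance of $\Delta\tilde{s}_{ij}$ is $(\sin^{2}\omega_i+\sin^{2}\omega_j)\sigma_\omega^{2}\le 2\sigma_\omega^{2}$, so the same comparison against the direct case (difference variance $2\sigma_\omega^{2}$) goes through, consistent with the single-$\omega$ shorthand in the statement. Everything after the variance propagation is then a routine monotonicity argument.
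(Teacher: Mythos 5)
Your proposal follows essentially the same route as the paper's proof: a first-order Taylor (delta-method) expansion of $\cos(\omega+\delta\omega)$ to obtain $\sigma_s^{2}=\sin^{2}(\omega)\,\sigma_\omega^{2}\leq\sigma_\omega^{2}$, followed by the observation that the sorting-error probability is monotonically decreasing in the noise variance. Your version is in fact somewhat more careful than the paper's --- you explicitly invoke the Lemma's formula and the monotonicity of $\Phi$, and you flag both the higher-order corrections to the linearization and the two-angle case $(\omega_i,\omega_j)$, none of which the paper addresses --- but these are refinements of the same argument rather than a different one.
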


\begin{proof}
Straightforward using Taylor expansion of $cos(\omega+\delta\omega)$
and provided in supplementary.
\end{proof}
\vspace{-3bp}
\begin{remark}
For small values of $\omega$, where a test data point is classified
with high confidence due to its proximity to a micro-cluster, we observe
that $\sigma_{\omega}$ can become significantly larger than $\sigma_{s}$
to match the sorting error probability. In other words, by sufficiently
reducing $\sigma_{\omega}$ through adequate computation to optimize
Eq. \ref{eq:optimization}, achieving a matching sorting error probability
would require an extremely low noise level in the score function,
on the order of $\frac{1}{100}$ for $\omega=5^{\circ}$. As such
precision in $f_{\theta}$ is generally difficult to achieve, this
implies that, in most cases, our method should yield a markedly more
accurate uncertainty measure than traditional methods.
\end{remark}

\section{Experimental setup}

\subsection{Datasets}
\textbf{CIFAR-10} \cite{Krizhevsky_etal_14Cifar} is a $32\times32\times3$
color image dataset with 10 classes.
\textbf{CAMELYON-17} \cite{camelyon} is a medical imaging dataset
containing images of size $96\times96\times3$ and with binary labels
of malignant or benign. It comprises of patches extracted from 50
Whole-Slide Images (WSI) of breast cancer metastases in lymph node
sections, with 10 WSIs from each of 5 hospitals in the Netherlands.
The training set has 302,436 patches from 3 hospitals, the validation
set 34,904 from a 4th, and the test set 85,054 from a 5th hospital.
\textbf{TinyImagenet} is a $64\times64\times3$ color image dataset
with 100,000 training samples across 200 classes.
\textbf{Imagenet }\cite{deng2009imagenet,ILSVRC15} is a $224\times224\times3$
color image dataset with 100,000 training samples across 1,000 classes.
\textbf{Noisy data}: CIFAR-10 test data with various noise distortions such as Uniform, Gaussian noise and brightness levels. \textbf{SVHN as OOD} \cite{netzer2011reading} is house numbers in
Google Street View images with 10 classes. We used 26,032 test images
of size $32\times32\times3$ as an Out-Of-Distribution (OOD) dataset
for the CIFAR-10 model. 

\begin{table*}
\begin{centering}
{\scriptsize{}%
\begin{tabular}{p{1.75cm}cccccccc}
\toprule 
\multirow{2}{*}{{\scriptsize Dataset}} & \multirow{2}{*}{{\scriptsize Method}} & \multicolumn{5}{c}{{\scriptsize Accuracy @ Top-\% Data (by TRUST Score) $\uparrow$}} & \multirow{2}{*}{{\scriptsize AURC$\downarrow$}} & \multirow{2}{*}{AUSE{\scriptsize$\downarrow$}}\tabularnewline
\cmidrule{3-7}
 &  & {\scriptsize 20} & {\scriptsize 40} & {\scriptsize 60} & {\scriptsize 80} & {\scriptsize 100} &  & \tabularnewline
\midrule
\midrule 
\multirow{11}{*}{\begin{cellvarwidth}
\centering
{\scriptsize CIFAR-10}{\scriptsize\par}
\end{cellvarwidth}} 
& {\scriptsize Dropout} & {\scriptsize 98.15} & {\scriptsize 98.08} & {\scriptsize 97.95} & {\scriptsize 97.88} & {\scriptsize 89.0} & 0.024 & 0.019\tabularnewline
\cmidrule{2-9}
& {\scriptsize Density aware} & {\scriptsize 99.95} & {\scriptsize 99.83} & {\scriptsize 99.33} & {\scriptsize 96.64} & {\scriptsize 86.61} & 0.0196 & 0.010\tabularnewline
\cmidrule{2-9}
& {\scriptsize ViM} & {\scriptsize 92.65} & {\scriptsize 92.40} & {\scriptsize 92.38} & {\scriptsize 92.25} & {\scriptsize 92.06} & 0.076 & 0.073\tabularnewline
\cmidrule{2-9}
& {\scriptsize SIRC (MSP,\textbar\textbar z\_1)} & {\scriptsize 91.30} & {\scriptsize 92.30} & {\scriptsize 92.47} & {\scriptsize 92.15} & {\scriptsize 92.06} & 0.083 & 0.079\tabularnewline
\cmidrule{2-9}
& {\scriptsize SIRC (MSP,\textbar\textbar res)} & {\scriptsize 92.65} & {\scriptsize 92.40} & {\scriptsize 92.38} & {\scriptsize 92.25} & {\scriptsize 92.06} & 0.076 & 0.073\tabularnewline
\cmidrule{2-9}
& {\scriptsize SIRC (-H,\textbar\textbar z\_1)} & {\scriptsize 91.30} & {\scriptsize 92.30} & {\scriptsize 92.47} & {\scriptsize 92.15} & {\scriptsize 92.06} & 0.083 & 0.079\tabularnewline
\cmidrule{2-9}
& {\scriptsize SIRC (-H,\textbar\textbar Res)} & {\scriptsize 92.65} & {\scriptsize 92.40} & {\scriptsize 92.26} & {\scriptsize 92.27} & {\scriptsize 92.06} & 0.076 & 0.073\tabularnewline
\cmidrule{2-9}
& {\scriptsize LogitNorm} & {\scriptsize \bf{99.95}} & {\scriptsize \bf{99.75}} & {\scriptsize \bf{99.38}} & {\scriptsize \bf{98.53}} & {\scriptsize \bf{94.36}} & 0.009 & 0.007\tabularnewline
\cmidrule{2-9}
& {\scriptsize LogitNorm+TRUST} & {\scriptsize \bf 99.95} & {\scriptsize \bf 99.95} & {\scriptsize \bf 99.82} & {\scriptsize \bf 99.21} & {\scriptsize \bf 94.36} & \textbf{0.006} & \textbf{0.005}\tabularnewline
\cmidrule{2-9}
& {\scriptsize CrossEntro+TRUST} & {\scriptsize \bf{100.0}} & {\scriptsize \bf{99.98}} & {\scriptsize \bf{99.65}} & {\scriptsize \bf{98.19}} & {\scriptsize \bf{92.06}} & 0.011 & 0.007\tabularnewline
\midrule 
\multirow{2}{*}{\begin{cellvarwidth}
\centering
{\scriptsize CAMELYON-17}{\scriptsize\par}
\end{cellvarwidth}} 
& {\scriptsize Dropout} & {\scriptsize 87.03} & {\scriptsize 81.93} & {\scriptsize 85.61} & {\scriptsize 87.77} & {\scriptsize 85.24} & 0.14 & 0.164\tabularnewline
\cmidrule{2-9}
& {\scriptsize CrossEntro+TRUST } & {\scriptsize \bf 93.46} & {\scriptsize \bf 90.41} & {\scriptsize \bf 88.25} & {\scriptsize \bf 85.86} & {\scriptsize \bf 83.52} & \textbf{0.10} & \textbf{0.089}\tabularnewline
\midrule 
\multirow{2}{*}{\begin{cellvarwidth}
\centering
{\scriptsize TinyImagenet}{\scriptsize\par}
\end{cellvarwidth}} 
& {\scriptsize Dropout} & {\scriptsize 88.57} & {\scriptsize 92.88} & {\scriptsize 94.29} & {\scriptsize 94.29} & {\scriptsize 81.71} & 0.09 & 0.040\tabularnewline
\cmidrule{2-9}
& {\scriptsize CrossEntro+TRUST} & \textbf{100.0} & {\scriptsize \bf 97.14} & {\scriptsize \bf 93.33} & {\scriptsize \bf 85.36} & {\scriptsize \bf 76.29} & \textbf{0.07} & \textbf{0.037}\tabularnewline
\midrule 
\multirow{1}{*}{{\scriptsize Imagenet}} 
& {\scriptsize CrossEntro+TRUST} & {\scriptsize \bf 92.20} & {\scriptsize \bf 88.75} & {\scriptsize \bf 87.17} & {\scriptsize \bf 86.52} & {\scriptsize \bf 85.62} & \textbf{0.11} & \textbf{0.096}\tabularnewline
\bottomrule
\end{tabular}}{\scriptsize\par}
\par\end{centering}
\caption{Accuracy over the top-$k$ test samples sorted by TRUST score, AURC, and AUSE across four datasets and model architectures (Bold: monotonic Acc, best: AURC/AUSE). }
\label{tab:top_percentage_acc}
\end{table*}

\subsubsection{Models, Baselines, and Evaluation}

We use SimpleNet (3 convolutional layers and 3 fully connected layers), SimpleNet+ (4 convolutional layers and 4 fully connected
layers), VGG11{\footnotesize ,} PreactResNet18{\footnotesize ,} ResNet18,
ResNet50 and ViT-Base (trained from scratch and pre-trained) \cite{alexey2020image}
models for conducting the experiments. CIFAR-10 dataset is trained
with all the SimpleNet, SimpleNet+, VGG11,{\footnotesize{} }PreactResNet18
and ViT-Base (trained from scratch), CAMELYON-17 was trained with
PreactResNet18, TinyImagenet is trained with ResNet50 and a pre-trained ViT-Base model was used for Imagenet. For TinyImageNet, we selected classes with $\geq$60\% accuracy (overall: 65.19\%) for further analysis. For ImageNet, we randomly chose 100 classes with moderate accuracy (80-90\%). We use the Adam optimizer \cite{Kingma_Ba_14Adam} with a learning rate of 0.001. We set the softmax temperature $T$ to $5.0$, $\lambda$ to 0.001, and the number of optimization epochs to 10k in all experiments.
The main results (Table \ref{tab:top_percentage_acc}) use PreactResNet18 for CIFAR-10 and CAMELYON-17, ResNet-50 for TinyImageNet, and pre-trained ViT-Base for ImageNet.
\begin{figure*}
    \subfloat 
{
\includegraphics[width=0.32\columnwidth]{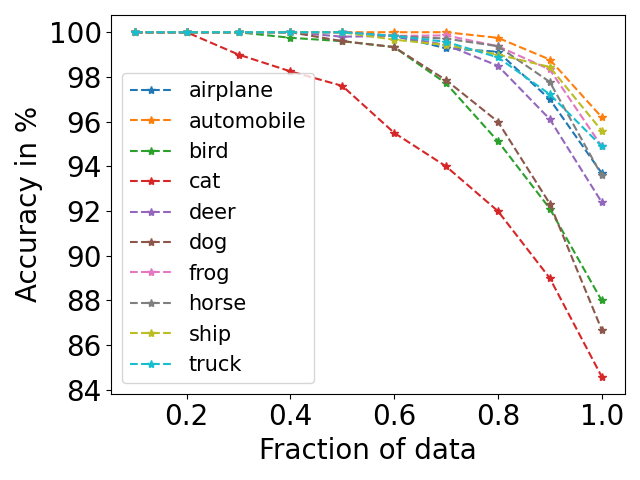}
}
    \subfloat 
{
\includegraphics[width=0.32\columnwidth]{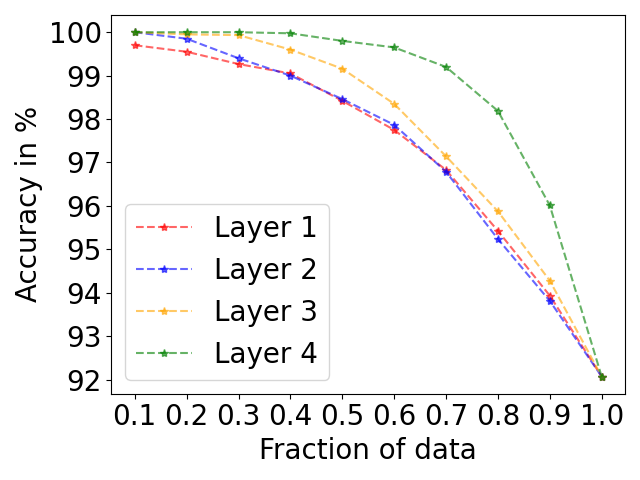}
}
\subfloat
{
\includegraphics[width=0.32\columnwidth]{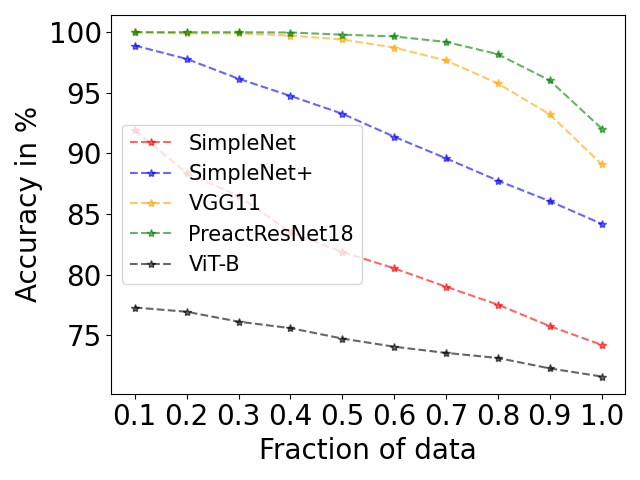}
}
\caption{Comparison of TRUST score-based accuracies on CIFAR-10: (a) class-wise, (b) layer-wise, and (c) across five (SimpleNet to ViT-B) different models.}\label{fig:cifar-10-detailed}
\end{figure*}
We added dropout \cite{gal2016dropout} (0.2 for CIFAR-10; 0.3 for CAMELYON-17 and TinyImageNet) to PreactResNet18, ResNet18, and ResNet50. For each test sample, we averaged predictions over 50 stochastic passes, using the resulting class distribution's entropy \cite{shannon1948mathematical,Gray_11Entropy} as the uncertainty measure. Other baselines
reported are Density aware \cite{yoon2024uncertainty}, ViM \cite{wang2022vim}, SIRC \cite{xia2022augmenting},
and LogitNorm \cite{wei2022mitigating}.

\textbf{Evaluation}: We report accuracy (Table \ref{tab:top_percentage_acc}) to evaluate how well TRUST ranks predictions by confidence. This accuracy is over the top-$k$ samples sorted by TRUST score. We also report standard measures such as AURC \cite{zhou2024novel} and AUSE \cite{lind2024uncertainty}
scores for a comprehensive evaluation. 

\section{Results}
\subsection{In data stratification}

Table \ref{tab:top_percentage_acc} presents the accuracy of increasingly
confident test subsets, starting from the entire data set to the top 10\%
of the most confident subset, using baselines and TRUST scores as
different measures of prediction uncertainty. We report \textbf{CrossEntro+TRUST}
for models trained with Cross Entropy loss and \textbf{LogitNorm+TRUST}
for those trained with LogitNorm loss \cite{wei2022mitigating} in
Table \ref{tab:top_percentage_acc}. Across all four datasets, the
TRUST score consistently shows the desirable property of accuracy
increasing monotonically with smaller subset sizes. In contrast, dropout-based,
ViM, SIRCs scores fail to exhibit this pattern. Even for CIFAR-10,
TRUST score reaches 99\% accuracy
at the top 70\%. Among CIFAR-10 results, Density-aware and LogitNorm
produced the most comparable results.\begin{figure*}
    \subfloat {
\includegraphics[width=0.45\columnwidth]{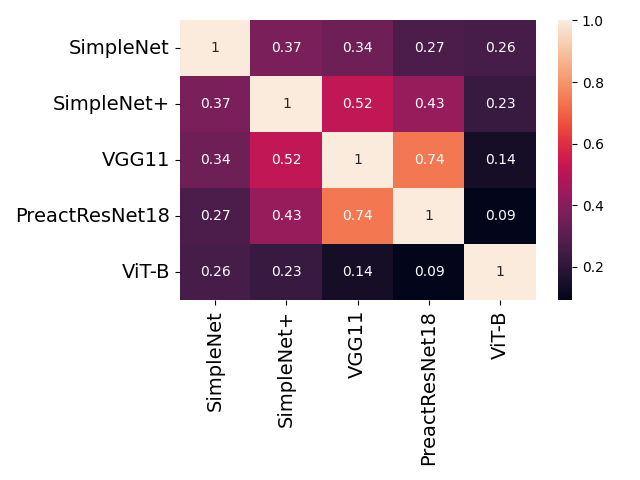}
}
\subfloat
{
\includegraphics[width=0.45\columnwidth]{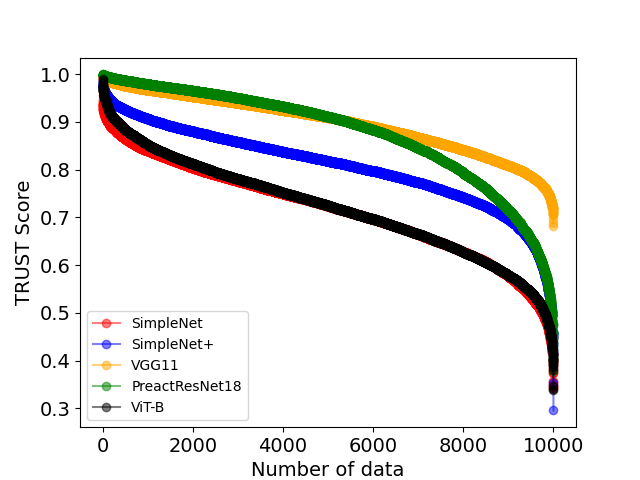}
}\caption{Comparison of model-wise TRUST score rankings and distributions on the CIFAR-10 test dataset: (a) Spearman’s rank correlation between model-specific TRUST score rankings, and (b) Sorted TRUST scores assigned by each model.}\label{fig:Spearmann's-rank-corrleation}
\end{figure*}
\begin{figure*}
\subfloat{
\includegraphics[width=0.51\columnwidth]{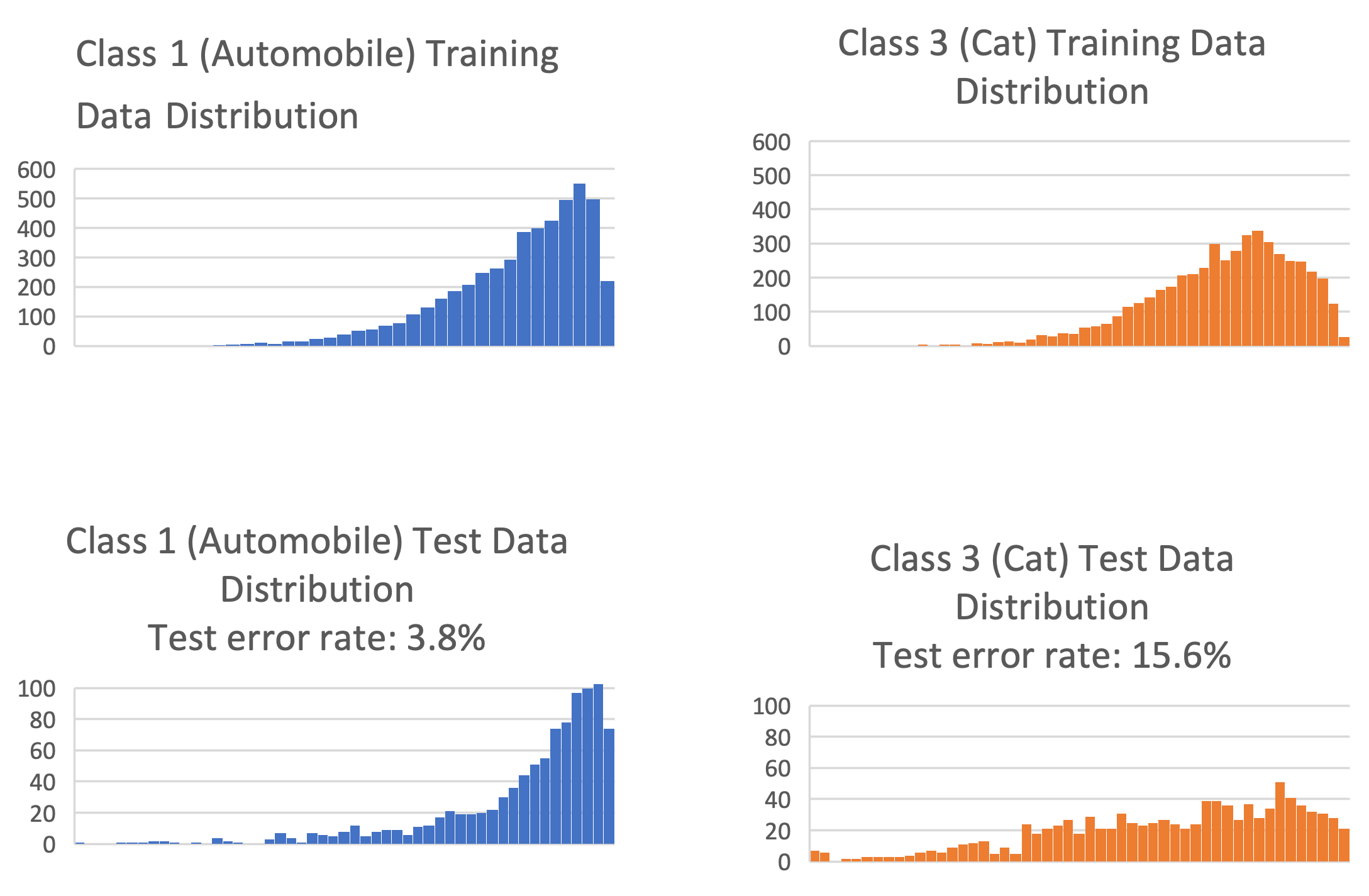}
}
\hspace{0.09\textwidth}
\subfloat{
\includegraphics[width=0.4\columnwidth]{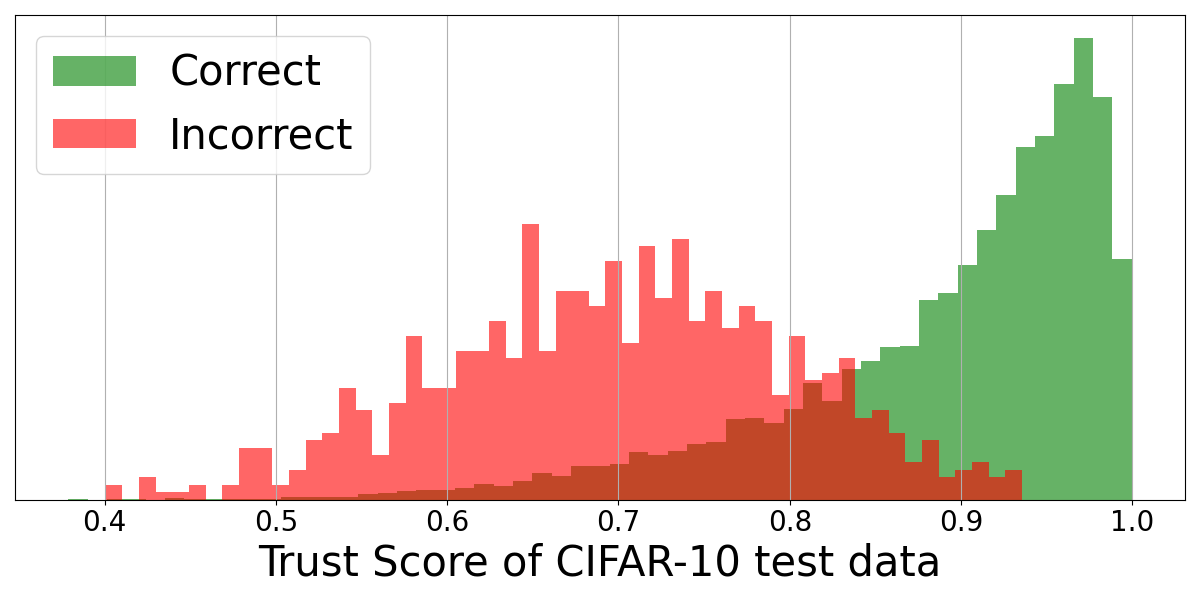}}
\caption{TRUST score distributions for CIFAR-10. (a) Training vs. test samples for automobile and cat classes of CIFAR-10. (b) Normalized histogram of CIFAR-10 test samples with correct (in green) and incorrect (in red) predictions.}.\label{fig:TRUST-score-distributions}
\end{figure*} Since LogitNorm is a training
method, we further applied TRUST to the LogitNorm-trained (LogitNorm+TRUST)
network, which led to additional improvements across all stratification
levels. This demonstrates that our method remains effective in enhancing
uncertainty quantification beyond what LogitNorm alone achieves. TinyImagenet
shows the most significant improvement, with accuracy rising from
76\% to 100\% at the top 20\% level. For a different architecture,
such as ViT-B on Imagenet and the dataset CAMELYON-17, we also observe
steady accuracy improvements with smaller subsets. These results empirically
demonstrate the utility of the TRUST metric in effectively segregating
reliable predictions from unreliable ones.
\begin{figure*}
\vspace{-16pt}
    \subfloat 
{
\includegraphics[width=0.45\columnwidth]{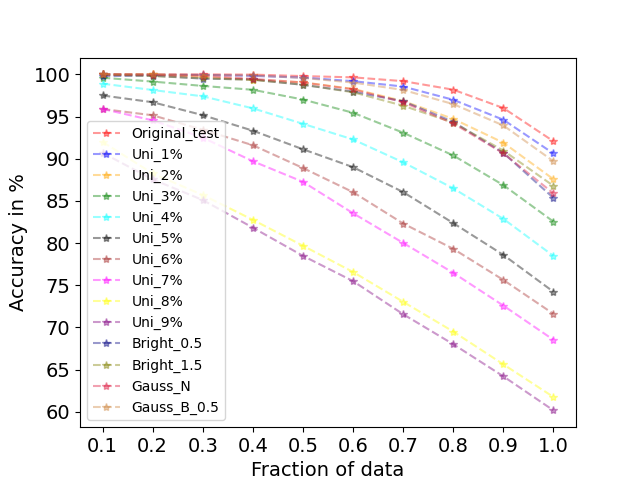}
}
\subfloat  
{
\includegraphics[width=0.45\columnwidth]{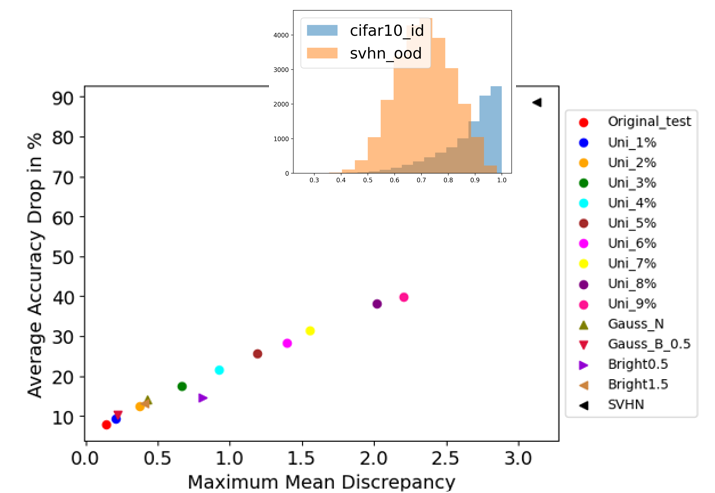}
}
\caption{(a) Accuracy vs. TRUST-stratified CIFAR-10 test samples under noise corruptions: Uni\_p\%, Bright\_b, Gauss\_N, Gauss\_B\_0.5, and Original test, (b)Accuracy drop vs. MMD for corrupted and OOD (SVHN) test sets; includes TRUST score histograms for CIFAR-10 and SVHN.}.\label{fig:noises}
\vspace{-16pt}
\end{figure*}
Fig \ref{fig:cifar-10-detailed}(a) presents stratification results
across the 10 classes for the PreactResNet18 CIFAR-10 model, highlighting
the ‘cat’ class as the most challenging. All classes, except for ‘cat’,
‘dog’, and ‘bird’, achieved nearly 100\% accuracy at the top 60\%
stratification level. This suggests that, given a representative test
set, we could set class-specific thresholds to achieve the target
accuracy at the class level, covering a broader portion of the test
data distribution than with a single aggregate threshold. For CIFAR-10,
using class-specific thresholds, we could cover approximately 52\%
of the test data with nearly 100\% accuracy significantly more than
the 30\% coverage across all classes (Table \ref{tab:top_percentage_acc},
CIFAR-10, CrossEntro+TRUST row). Interestingly, we also see from Fig \ref{fig:cifar-10-detailed}(b)
that the monotonicity is preserved even across different feature layers
of the model but the last feature layer seem to offer the best stratification. Fig \ref{fig:cifar-10-detailed}(c)
shows the accuracy versus stratification across five different models
trained on CIFAR-10. Regardless of model size (various CNNs) or type
(CNN and ViT), the TRUST score consistently provided the desired monotonic
increase in accuracy as more high-confidence predictions were selected.

\subsection{Small to large model inspection}

Here, we examine the utility of TRUST scores in identifying patterns
across different classifiers to gain insights into their behaviors.
Fig \ref{fig:Spearmann's-rank-corrleation}(a) presents the Spearman\textquoteright s
rank correlation between the sorted order of test data across different
models on the CIFAR-10 dataset, revealing several key observations:
(a) High-accuracy models, such as PreactResNet18 and VGG11, exhibit
strong rank-order agreement, indicated by their high correlations.
This suggests that high-performing models may learn similar patterns,
resulting in comparable ranking. (b) SimpleNet+ is more correlated
with VGG11 than with PreactResNet18, suggesting that ResNet-based
models might learn slightly distinct features compared to simpler
CNN models. (c) ViT stands out as the least correlated model, likely
due to its unique architecture, which leads it to learn different
patterns.

Fig \ref{fig:Spearmann's-rank-corrleation}(b) shows the sorted TRUST scores produced
by various models on the CIFAR-10 test dataset. Although PreactResNet18
and VGG11 achieve similar accuracy levels, PreactResNet18 demonstrates
better calibration, likely due to its robustness in capturing tail
data points distinct from the concentrated modes. In contrast, simpler
models display a more gradual decline in concentration around the
mode and a broader spread, indicating a smoother function compared
to VGG11 and PreactResNet18, even though SimpleNet+ achieves accuracy
similar to VGG11. In summary, the distribution of TRUST scores provides
valuable insights into how data is represented in a model\textquoteright s
feature space, guiding us toward selecting models with better-calibrated
and reliable confidence scores. 
\subsection{Understanding Data Alignment through TRUST}
We use TRUST scores to assess alignment between training and test data and its impact on accuracy. Fig \ref{fig:TRUST-score-distributions}(a) shows that classes with high train-test TRUST score overlap (e.g., ‘airplane’) have lower test error, while those with low overlap (e.g., ‘cat’) show higher error. We also observe class-specific variance and tail-driven errors. Fig \ref{fig:TRUST-score-distributions}(b) highlights how TRUST ranks correct predictions higher than the incorrect ones. We further test on noise-corrupted CIFAR-10 and OOD data (SVHN), finding that accuracy drop correlates linearly with TRUST distribution divergence (MMD) from the training set (Fig \ref{fig:noises}(a)), and the monotonic accuracy trend persists across corruptions (Fig \ref{fig:noises}(b)).
\begin{figure*}
\subfloat [\label{fig:a-1}] 
{
\includegraphics[width=0.32\columnwidth]{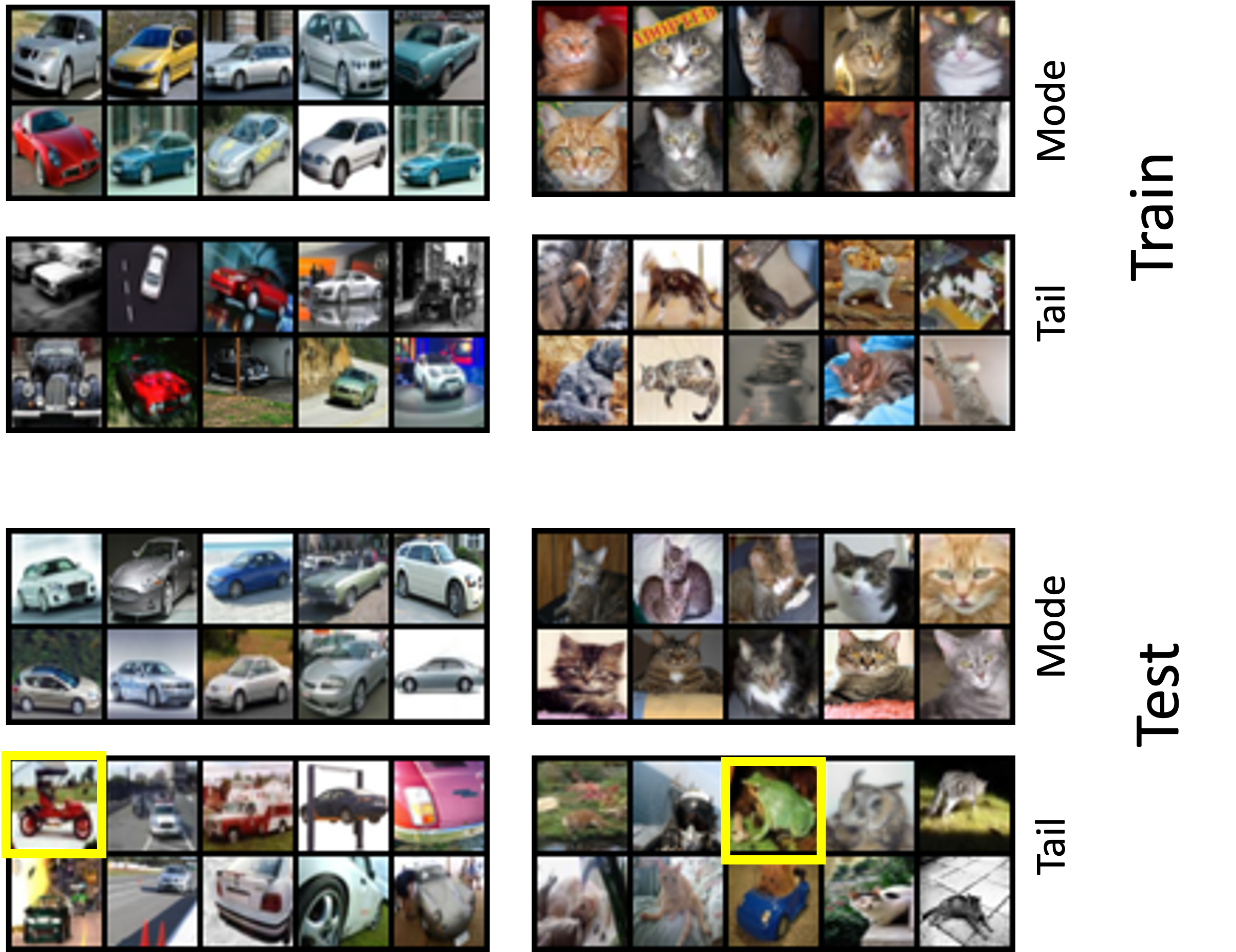}
}
\subfloat [\label{fig:b-1}]
{
\includegraphics[width=0.32\columnwidth]{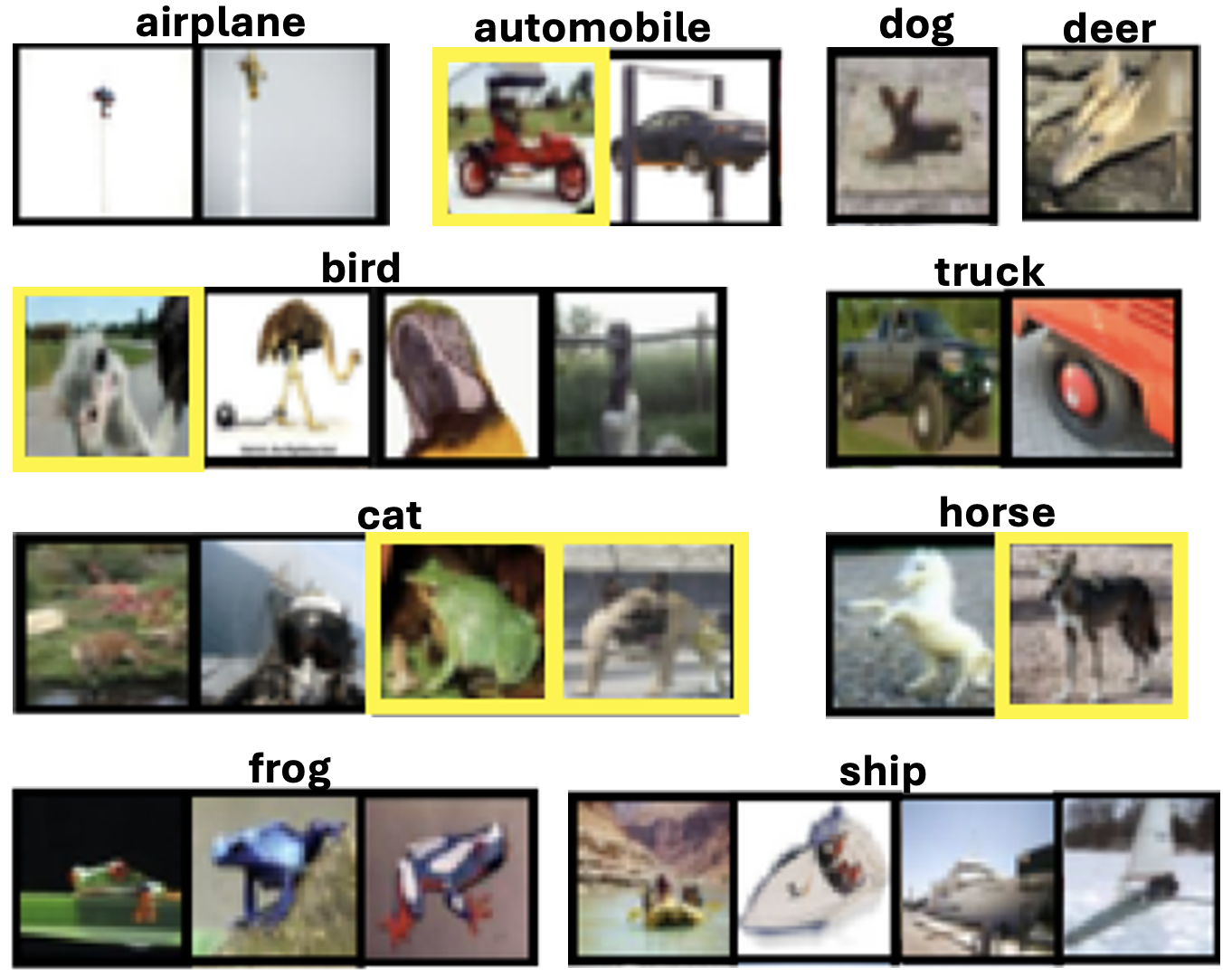}
}
\subfloat [\label{fig:b-1}]
{
\includegraphics[width=0.34\columnwidth]{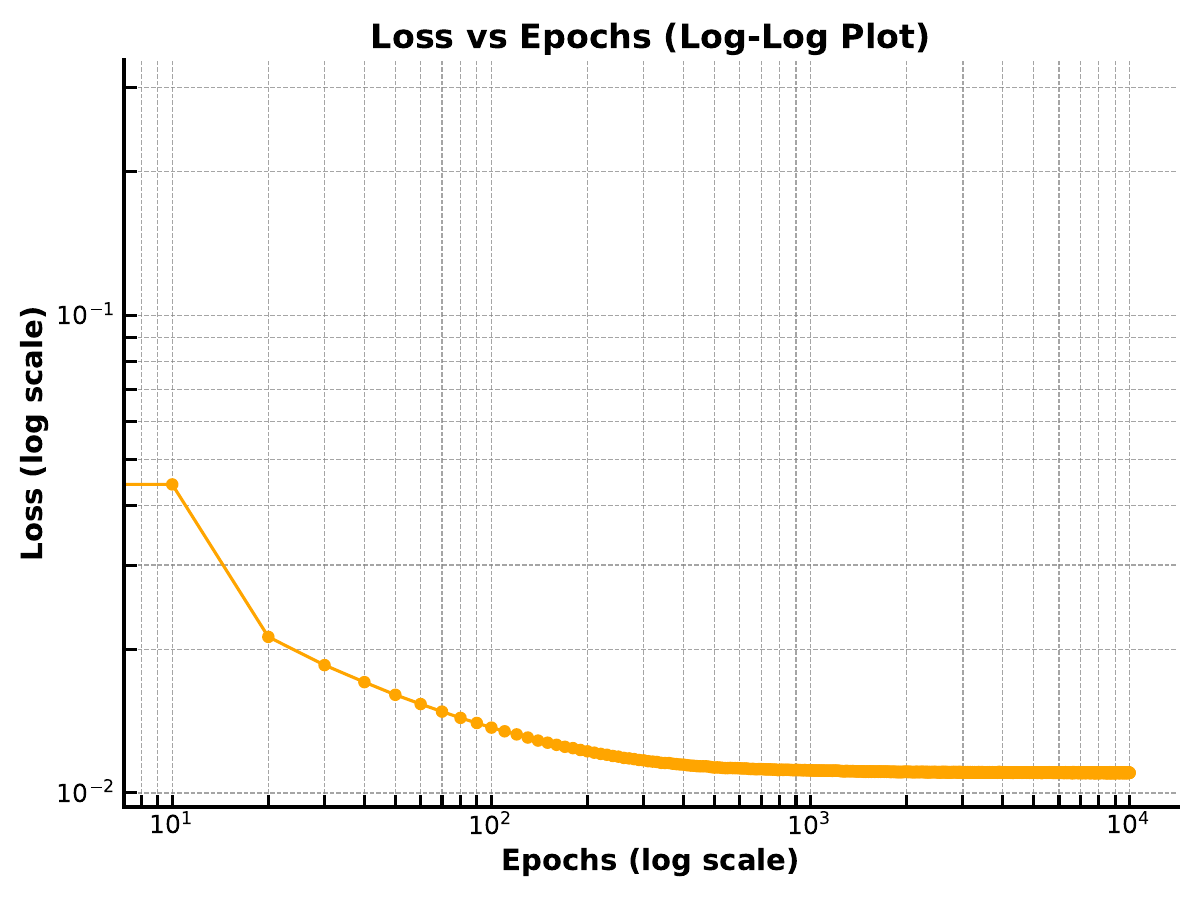}
}
\caption{TRUST score-based mode and tail samples from CIFAR-10: (a) ‘Automobile’ and ‘Cat’ class examples from train and test sets; (b) mislabelled and rare test samples across all classes; (c) Convergence plot for our chosen $T$ (5.0) and $\lambda$ (0.001) values (ablation are in supplementary).}\label{fig:wrong}
\vspace{-3pt}
\end{figure*}

We analyze CIFAR-10 training and test images based on their TRUST scores to identify typical (high-score) and rare (low-score) samples. Fig \ref{fig:wrong}(a) illustrates this for the ‘automobile’ and ‘cat’ classes, where high-score samples represent common patterns, while low-score ones capture rare poses and mislabeled samples in the original dataset (highlighted). Fig \ref{fig:wrong}(b) shows similar rare / mislabeled test samples in all classes. Fig \ref{fig:wrong}(c) shows that the convergence happens much before our 10k iteration steps. 

\noindent\textbf{Computational cost and limitation:} We use the V100 GPUs to run all the experiments and it may take several seconds per sample to compute the TRUST score making it unsuitable for real-time application. Further, TRUST’s effectiveness diminishes with smaller architectures and the current work only consider image data.

\noindent\textbf{Broader Impact:} Our work contributes positively by making machine learning more trustworthy.

\section{Conclusion}

\label{sec:conclusion}

In this paper, we first analyze why current methods often produce
noisy measures of epistemic uncertainty, and then propose a new test-time
optimization-based method for achieving significantly improved estimates.
We introduce a new measure, the TRUST score, which aligns closely with
epistemic uncertainty, and demonstrate its utility across a diverse
range of tasks, including data stratification, assessing alignment
between training and test data, dataset inspection, and deriving insights
into model behavior when tested across four benchmark datasets using a multitude of different model architectures.

\bibliographystyle{plain}
\bibliography{ref}

\newpage
\begin{center}
    \LARGE \textbf{Supplementary Material}
\end{center}

\section*{Theorem details}

\subsection{Proofs}
\begin{theorem}
(\textbf{Concentration of Measure on the Hypersphere}):
\end{theorem}

Let $x$ be a random vector in $\mathbb{R}^{d}$, where each component
$x_{i}$ is independently drawn from a distribution with mean $0$
and variance $\sigma^{2}$. Define the Euclidean norm of $x$ as $||x||=\sqrt{\sum_{i=1}^{d}x_{i}^{2}}$.
\begin{enumerate}
\item \textbf{Norm Concentration: }As the dimensionality $d\to\infty$,
the Euclidean norm $||x||$ concentrates around $\sqrt{d}\sigma$
, meaning that for any small $\epsilon>0$ , $Pr\left(\left|||x||-\sqrt{d}\sigma\right|<\epsilon\sqrt{d}\sigma\right)\to1$.
\item \textbf{Surface Concentration:} Consequently, as $d$ grows large,
the points $x$ become increasingly concentrated near the surface
of a hypersphere with radius $\sqrt{d}\sigma$ centered at the origin.
Specifically, the probability that a randomly chosen point lies within
a thin shell of radius $\sqrt{d}\sigma\pm\epsilon$ approaches 1 as
$d\to\infty$ .
\end{enumerate}
\begin{proof}
\textbf{1. Norm Concentration}

Let $x$ be a random vector in $\mathbb{R}^{d}$, where each component
$x_{i}$ is independently drawn from a distribution with mean $0$
and variance $\sigma^{2}$. The Euclidean norm of $x$ is given by:
$\|x\|=\sqrt{\sum_{i=1}^{d}x_{i}^{2}}$.

Define $S=\sum_{i=1}^{d}x_{i}^{2}$. The expectation of $S$ is:

\[
\mathbb{E}[S]=\sum_{i=1}^{d}\mathbb{E}[x_{i}^{2}]=d\sigma^{2}
\]
.

The variance of $S$ can be computed as:

\[
\text{Var}(S)=\sum_{i=1}^{d}\text{Var}(x_{i}^{2})
\]
,

where for each $x_{i}^{2}$, using the properties of variance: $\text{Var}(x_{i}^{2})=\mathbb{E}[x_{i}^{4}]-(\mathbb{E}[x_{i}^{2}])^{2}$
and assuming $x_{i}$ comes from a distribution with finite fourth
moment, we denote $\mathbb{E}[x_{i}^{4}]$ as $\mu_{4}$, we can rewrite:

\[
\text{Var}(S)=d(\mu_{4}-\sigma^{4})
\]
.

By Chebyshev\textquoteright s inequality, the probability that $S$
deviates from its expectation is bounded as:

\[
Pr\left(\left|S-d\sigma^{2}\right|\geq\epsilon d\sigma^{2}\right)\leq\frac{\text{Var}(S)}{(\epsilon d\sigma^{2})^{2}}=\frac{d(\mu_{4}-\sigma^{4})}{\epsilon^{2}d^{2}\sigma^{4}}
\]
.

As $d\to\infty$ , the right-hand side tends to $0$, implying: $S\to d\sigma^{2}\quad\text{with high probability}$.

Taking the square root, the norm $\|x\|=\sqrt{S}$ concentrates around
$\sqrt{d}\sigma$. More formally, for any $\epsilon>0$ : $\Pr\left(\left|\|x\|-\sqrt{d}\sigma\right|<\epsilon\sqrt{d}\sigma\right)\to1\quad\text{as }d\to\infty$.

\textbf{2. Surface Concentration}

From the norm concentration result, we know that the Euclidean norm
$\|x\|$ is highly likely to lie in the interval $[\sqrt{d}\sigma-\epsilon\sqrt{d}\sigma,\sqrt{d}\sigma+\epsilon\sqrt{d}\sigma]$
. This implies that the random vector $x$ is concentrated within
a thin shell of radius $\sqrt{d}\sigma\pm\epsilon\sqrt{d}\sigma$
. 

More formally, let $r=\|x\|$ and consider the probability that $x$
lies within a shell of width $2\epsilon\sqrt{d}\sigma$ : $Pr\left(\sqrt{d}\sigma-\epsilon\sqrt{d}\sigma\leq r\leq\sqrt{d}\sigma+\epsilon\sqrt{d}\sigma\right)$.
Using the norm concentration derived above, this probability approaches
1 as $d\to\infty$. The geometric interpretation is that, as $d$
grows large, most of the probability mass for the random vector $x$
is concentrated on the surface of a hypersphere with radius $\sqrt{d}\sigma$
.
\end{proof}
\begin{theorem}

Let $n$ points be independently and uniformly distributed on the
surface of a $d$-dimensional unit hypersphere. Then, the expected
minimum value of $\cos(\omega)$ , where $\omega$ is the angle between
any pair of points, is approximately given by:
$E[\cos(\omega_{\text{min}})]\approx-\sqrt{\frac{2\ln n}{d}}$

where $\cos(\omega_{\text{min}})$ denotes the minimum cosine value
over all pairs of points under the assumption of both $n$ and $d$
being large.
\end{theorem}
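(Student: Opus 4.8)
The plan is to follow the two-step route indicated in the statement: first pin down the marginal law of a single pairwise cosine, then feed it into an extreme-value argument. For the first step I would exploit rotational invariance. Writing each point as $u = g/\|g\|$ with $g \sim \mathcal{N}(0, I_d)$, the cosine between two independent points equals, by symmetry, the first coordinate $u_1$ of a single uniform point, whose density on $[-1,1]$ is proportional to $(1-t^2)^{(d-3)/2}$. The cleanest way to obtain the Gaussian approximation (the lemma referenced in the statement) is the representation $\sqrt{d}\,u_1 = g_1 / (\|g\|/\sqrt{d})$: by the Concentration of Measure theorem already established, $\|g\|/\sqrt{d}\to 1$ in probability, so $\sqrt{d}\,\cos(\omega)\Rightarrow g_1\sim\mathcal{N}(0,1)$, i.e. $\cos(\omega)$ is approximately $\mathcal{N}(0,1/d)$.

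For the second step I would make the informal ``extreme value theory'' concrete through a first-moment (union-bound) threshold calculation. Fixing a reference point, the expected number of the remaining $n-1$ points whose cosine lies below $-t$ is $\approx n\,(1-\Phi(t\sqrt{d}))$; invoking the Mills-ratio tail estimate $1-\Phi(s)\sim e^{-s^2/2}/(s\sqrt{2\pi})$, this equals $\approx n\,e^{-t^2 d/2}/(t\sqrt{2\pi d})$. Balancing this count against $\Theta(1)$ locates the minimum: to leading order $t^2 d/2 = \ln n$, giving the threshold $t^\star = \sqrt{2\ln n/d}$. A matching argument---bounding the probability that the minimum lies well below $-t^\star$ via the same first moment, and that it lies well above via a second-moment or Poisson lower bound---shows the scaled minimum concentrates at $-t^\star$, and the rapidly decaying Gaussian tail supplies the uniform integrability needed to upgrade this to convergence of the expectation. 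This yields $E[\cos(\omega_{\min})]\approx -\sqrt{2\ln n/d}$.

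The main obstacle is that the $\binom{n}{2}$ pairwise cosines are not independent: pairs sharing an endpoint are correlated, so textbook i.i.d. extreme-value theory does not apply verbatim. I would control this with a Chen--Stein Poisson approximation, showing that the number of pairs with cosine below the threshold is asymptotically Poisson because overlapping pairs contribute only a lower-order term to the second moment relative to the dominant independent-pair contribution. I would also be explicit about the coefficient: the $\sqrt{2}$ (rather than $2$) corresponds to taking the effective sample size to be $n$---the minimum over the neighbours of one fixed point, which is precisely the micro-cluster-separation quantity used in the Corollary---whereas the full $\binom{n}{2}$ count would replace $\ln n$ by $\ln\binom{n}{2}\approx 2\ln n$; the approximate equality in the statement is what absorbs this ambiguity. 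The remaining technical care-point, establishing the tail estimate uniformly over the moderate-deviation range $t=\Theta(\sqrt{\ln n/d})$ where $t\sqrt{d}\to\infty$, is routine given standard two-sided Mills-ratio bounds on $1-\Phi$.
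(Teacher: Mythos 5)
Your proposal follows the same two-step route as the paper's own proof: first establish that $\cos(\omega)$ is approximately $\mathcal{N}(0,\tfrac{1}{d})$, then run an extreme-value threshold calculation over the pairwise cosines. The differences are in rigor rather than strategy, and two of them are worth recording. First, the paper simply raises $1-\Phi(t\sqrt{d})$ to the power $\binom{n}{2}$, i.e.\ it treats all pairwise cosines as independent; your Chen--Stein step is what actually justifies this, since pairs sharing an endpoint are correlated and one needs the second-moment computation to show the overlapping pairs contribute only lower-order terms. Second, your remark about the coefficient exposes a genuine slip in the paper's derivation: the paper balances $\frac{n^{2}}{2}\Phi(t\sqrt{d})\approx 1$, which gives $\Phi(t\sqrt{d})\approx 2/n^{2}$ and hence $t\sqrt{d}\approx-\sqrt{2\ln(n^{2}/2)}\approx-2\sqrt{\ln n}$, yet it writes down $t\sqrt{d}\approx-\sqrt{2\ln n}$; the stated formula corresponds to an effective comparison count of $n$ (the minimum over the neighbours of a fixed point), not $\binom{n}{2}$ (the global minimum over all pairs), exactly as you observe. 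Since both versions are $\Theta\bigl(\sqrt{\ln n/d}\bigr)$ and the downstream Corollary only needs $E[\cos(\omega_{\text{min}})]\to 0$ as $d\to\infty$, nothing breaks, but your accounting is the internally consistent one. Your derivation of the marginal law via $u=g/\|g\|$ together with norm concentration is also a cleaner route to $\mathcal{N}(0,\tfrac{1}{d})$ than the paper's appeal to the CLT on $\sum_{i}x_{1i}x_{2i}$, though both reach the same conclusion.
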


\begin{proof}
Let $n$ points be independently and uniformly distributed on the
surface of a $d$-dimensional unit hypersphere $\mathcal{S}^{d-1}$.
For two points $x_{1},x_{2}$ on $\mathcal{S}^{d-1}$, the cosine
of the angle $\omega$ between them is given by:

\[
\cos(\omega)=x_{1}\cdot x_{2}=\sum_{i=1}^{d}x_{1i}x_{2i}
\]

where $x_{1}\cdot x_{2}$ is the dot product of the two points. We
seek the expected minimum value of $\cos(\omega)$ over all pairs
of points: $E[\cos(\omega_{\text{min}})]\approx-\sqrt{\frac{2\ln n}{d}}$
under the assumption that n and d are large.

\textbf{Step 1: Distribution of $\cos(\omega)$}

On a $d$ -dimensional hypersphere, if two points are independently
and uniformly distributed, the dot product $x_{1}\cdot x_{2}$ (or
equivalently $\cos(\omega)$ ) is approximately Gaussian for large
$d$ due to the Central Limit Theorem. Specifically: $\cos(\omega)\sim\mathcal{N}\left(0,\frac{1}{d}\right)$,
where the mean is $0$ (due to symmetry) and the variance is $\frac{1}{d}$
because each component $x_{1i}x_{2i}$ contributes $\frac{1}{d}$
to the variance.

\textbf{Step 2: Minimum of Pairwise Cosines}

The number of unique pairs among $n$ points is:

\[
\binom{n}{2}\approx\frac{n^{2}}{2}
\]

For large $n$, the minimum value of $\cos(\omega)$ is determined
by the smallest value among these $\frac{n^{2}}{2}$ pairwise cosines.
The probability that any single cosine value is less than a threshold
$t$ is given by the cumulative distribution function (CDF) of a standard
normal distribution scaled by $\sqrt{d}$, which is:

\[
P(\cos(\omega)<t)\approx\Phi(t\sqrt{d})
\]

where $\Phi(z)$ is the CDF of a standard normal distribution. For
the smallest cosine $\cos(\omega_{\text{min}})$, the complementary
probability that no cosine value is less than $t$ is:

\[
P(\cos(\omega_{\text{min}})\geq t)\approx\left[1-\Phi(t\sqrt{d})\right]^{\frac{n^{2}}{2}}
\]

Taking the logarithm to simplify:

\[
\ln\left(P(\cos(\omega_{\text{min}})\geq t)\right)\approx\frac{n^{2}}{2}\ln\left(1-\Phi(t\sqrt{d})\right)
\]

For large $n$ , $\Phi(t\sqrt{d})$ becomes small, so we approximate
$\ln(1-\Phi(t\sqrt{d}))\approx-\Phi(t\sqrt{d})$ . Substituting:

\[
\ln\left(P(\cos(\omega_{\text{min}})\geq t)\right)\approx-\frac{n^{2}}{2}\Phi(t\sqrt{d})
\]

\textbf{Step 3: Approximation for the Minimum}

The expected minimum cosine value corresponds to the $t$ where:

\[
P(\cos(\omega_{\text{min}})\geq t)\approx e^{-1}
\]

Setting the above probability to $e^{-1}$ gives:

\[
\frac{n^{2}}{2}\Phi(t\sqrt{d})\approx1
\]

Solving for $t$ :

\[
\Phi(t\sqrt{d})\approx\frac{2}{n^{2}}
\]

For small arguments, the inverse CDF $\Phi^{-1}(p)$ of a Gaussian
distribution satisfies $\Phi^{-1}(p)\approx\sqrt{2\ln\frac{1}{p}}$
. Substituting $\Phi(t\sqrt{d})\approx\frac{2}{n^{2}}$:

\[
t\sqrt{d}\approx-\sqrt{2\ln n}
\]
.

Dividing through by $\sqrt{d}$, we find:

\[
t\approx-\sqrt{\frac{2\ln n}{d}}
\]

Thus, the expected minimum cosine value is approximately:

\[
E[\cos(\omega_{\text{min}})]\approx-\sqrt{\frac{2\ln n}{d}}
\]
\end{proof}
\begin{lemma}
Let $x_{1},x_{2},\ldots,x_{n}$ be a set of $n$ items, each associated
with a true score $s_{i}$ for $i=1,2,\dots,n$ . Suppose the observed
score $\tilde{s_{i}}$ of each item $x_{i}$ is corrupted by Gaussian
noise $\varepsilon_{i}$ with mean zero and standard deviation $\sigma$
, such that:

\[
\tilde{s_{i}}=s_{i}+\varepsilon_{i},\quad\text{where }\varepsilon_{i}\sim\mathcal{N}(0,\sigma^{2})
\]

Let $\Delta s_{ij}=s_{i}-s_{j}$ represent the true difference in
scores between items $i$ and $j$, and let $\sigma_{ij}^{2}=2\sigma^{2}$
denote the variance of the difference $\Delta\tilde{s_{ij}}=\tilde{s_{i}}-\tilde{s_{j}}$
due to noise. Define a sorting error to occur if the observed ordering
based on $s_{i}{\prime}$ differs from the true ordering based on
$s_{i}$. Then, the probability $P_{ij}$ of a sorting error between
items $i$ and $j$ (i.e., $\tilde{s_{i}}<\tilde{s_{j}}$ when $s_{i}>s_{j}$)
is given by:

\[
P_{ij}=P(\tilde{s_{i}}<\tilde{s_{j}})=1-\Phi\left(\frac{\Delta s_{ij}}{\sqrt{2}\sigma}\right)
\]

where $\Phi$ is the cumulative distribution function of the standard
normal distribution.
\end{lemma}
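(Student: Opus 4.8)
The plan is to reduce the pairwise sorting-error probability to a single Gaussian tail probability by analyzing the difference of the two noisy scores rather than each score in isolation. Without loss of generality I assume $s_i > s_j$, so that $\Delta s_{ij} = s_i - s_j > 0$ and the true ordering places item $i$ above item $j$; a sorting error is then exactly the event $\tilde{s_i} < \tilde{s_j}$, equivalently $\Delta\tilde{s}_{ij} := \tilde{s_i} - \tilde{s_j} < 0$. Phrasing the error this way is the key move, since it collapses the comparison of two correlated-looking quantities into a statement about one random variable.

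First I would compute the law of $\Delta\tilde{s}_{ij}$. Substituting $\tilde{s_i} = s_i + \varepsilon_i$ and $\tilde{s_j} = s_j + \varepsilon_j$ gives $\Delta\tilde{s}_{ij} = \Delta s_{ij} + (\varepsilon_i - \varepsilon_j)$. Because $\varepsilon_i$ and $\varepsilon_j$ are independent $\mathcal{N}(0,\sigma^2)$ variables, their difference is Gaussian with mean $0$ and variance $\sigma^2 + \sigma^2 = 2\sigma^2$, so $\Delta\tilde{s}_{ij} \sim \mathcal{N}(\Delta s_{ij}, 2\sigma^2)$. This is precisely the source of the stated $\sigma_{ij}^2 = 2\sigma^2$.

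Next I would evaluate $P_{ij} = P(\Delta\tilde{s}_{ij} < 0)$ by standardizing. Setting $Z = (\Delta\tilde{s}_{ij} - \Delta s_{ij})/(\sqrt{2}\sigma) \sim \mathcal{N}(0,1)$, the error event becomes $Z < -\Delta s_{ij}/(\sqrt{2}\sigma)$, whose probability is $\Phi(-\Delta s_{ij}/(\sqrt{2}\sigma))$. Invoking the symmetry $\Phi(-x) = 1 - \Phi(x)$ of the standard normal CDF then delivers the claimed expression $P_{ij} = 1 - \Phi(\Delta s_{ij}/(\sqrt{2}\sigma))$.

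There is no genuine obstacle here: once the difference is recognized as Gaussian, the result is a one-line change of variable. The only point that warrants care is the independence assumption, which is exactly what yields the factor of $2$ in the variance; had the noise terms been correlated with coefficient $\rho$, the variance would instead be $2\sigma^2(1-\rho)$ and the denominator in the final formula would change accordingly. I would also flag the implicit sign convention from the \emph{without loss of generality} step, so that the direction of the inequality defining a sorting error is consistent with the sign of $\Delta s_{ij}$.
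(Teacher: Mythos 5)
Your proposal is correct and matches the paper's own proof essentially step for step: both compute the law of the difference $\Delta\tilde{s}_{ij}=\Delta s_{ij}+(\varepsilon_i-\varepsilon_j)\sim\mathcal{N}(\Delta s_{ij},2\sigma^2)$ using independence, then standardize and apply $\Phi(-x)=1-\Phi(x)$. The only addition beyond the paper is your remark about the correlated-noise case, which is a harmless aside.
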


\begin{proof}
Let $x_{1},x_{2},\ldots,x_{n}$ represent a set of $n$ items, where
each item $x_{i}$ has a true score $s_{i}$. The observed score $\tilde{s}_{i}$
of each item is given by: $\tilde{s}_{i}=s_{i}+\varepsilon_{i}$,
where $\varepsilon_{i}\sim\mathcal{N}(0,\sigma^{2})$ represents Gaussian
noise with mean 0 and variance $\sigma^{2}$. Let:

\textbullet{} $\Delta s_{ij}=s_{i}-s_{j}$ represent the true difference
in scores.

\textbullet{} $\Delta\tilde{s}_{ij}=\tilde{s}_{i}-\tilde{s}_{j}$
represent the observed difference in scores.

\textbullet{} A sorting error occurs if $\tilde{s}_{i}<\tilde{s}_{j}$
when $s_{i}>s_{j}$.

We aim to compute the probability $P_{ij}$ of a sorting error, i.e.,
$P(\tilde{s}_{i}<\tilde{s}_{j})$ , and show that it equals:

\[
P{ij}=1-\Phi\left(\frac{\Delta s{ij}}{\sqrt{2}\sigma}\right)
\]
,

where $\Phi(.)$ is the CDF of the standard normal distribution.

\textbf{Step 1: Distribution of }$\Delta\tilde{s}_{ij}$

The observed score difference is given by:

\[
\Delta\tilde{s}_{ij}=\tilde{s}_{i}-\tilde{s}j=(s_{i}+\varepsilon_{i})-(s_{j}+\varepsilon_{j})=\Delta s{ij}+(\varepsilon_{i}-\varepsilon_{j})
\]

Since $\varepsilon_{i}$ and $\varepsilon_{j}$ are independent Gaussian
random variables with mean 0 and variance $\sigma^{2}$, their difference
$\varepsilon_{i}-\varepsilon_{j}$ is also Gaussian with: $\mathbb{E}[\varepsilon_{i}-\varepsilon_{j}]=0$,

\[
\text{Var}(\varepsilon_{i}-\varepsilon_{j})=\text{Var}(\varepsilon_{i})+\text{Var}(\varepsilon_{j})=\sigma^{2}+\sigma^{2}=2\sigma^{2}
\]

Thus, $\Delta\tilde{s}{ij}$ is distributed as $\sim\mathcal{N}(\Delta s_{ij},2\sigma^{2})$.

\textbf{Step 2: Probability of a Sorting Error}

A sorting error occurs if $\tilde{s}_{i}<\tilde{s}_{j}$ when $s_{i}>s_{j}$.
Equivalently, this is the event $\Delta\tilde{s}{ij}<0$. Using the
distribution of $\Delta\tilde{s}{ij}$, the probability of this event
is $P_{ij}=P(\Delta\tilde{s}_{ij}<0)$.

Standardizing the random variable $\Delta\tilde{s}{ij}$ , we define
a standard normal variable $z=\frac{\Delta\tilde{s}{ij}-\Delta s_{ij}}{\sqrt{2}\sigma}$,
which follows $z\sim\mathcal{N}(0,1)$ . The probability of a sorting
error becomes:

\[
P_{ij}=P(\Delta\tilde{s}{ij}<0)=P\left(Z<\frac{0-\Delta s{ij}}{\sqrt{2}\sigma}\right)=P\left(Z<-\frac{\Delta s_{ij}}{\sqrt{2}\sigma}\right)
\]

Using the symmetry of the standard normal distribution, $P(z<-z)=1-\Phi(z)$
, we have:

\[
P_{ij}=1-\Phi\left(\frac{\Delta s_{ij}}{\sqrt{2}\sigma}\right)
\]
\end{proof}
\begin{theorem}
Let s$=\cos(\omega)$ be the cosine similarity between a test data
point $x_{\text{test}}$ and its nearest mode $x_{\text{test}}^{\text{mode}}$,
with angle $\omega$ between them. Suppose Gaussian noise $\delta\theta\sim\mathcal{N}(0,\sigma_{\omega}^{2})$
is added to $\omega$, resulting in a noisy score $\tilde{s}=\cos(\omega+\delta\omega)$.
For an equivalent score function $s$ with direct Gaussian noise $\delta\omega\sim\mathcal{N}(0,\sigma_{\omega}^{2})$
, the probability $P_{\text{cos}}$ of a sorting error in the cosine
distance scenario is upper-bounded by the probability $P_{\text{direct}}$
of a sorting error in the direct noise scenario:

\[
P_{\text{cos}}\leq P_{\text{direct}}
\]

where the bound holds because $\sigma_{s}^{2}=\sin^{2}(\omega)\cdot\sigma_{\omega}^{2}\leq\sigma_{\omega}^{2}$,
with equality when $\omega=\frac{\pi}{2}$ .
\end{theorem}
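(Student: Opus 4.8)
The plan is to propagate the angular noise $\delta\omega$ through the cosine map to first order, read off the resulting effective noise distribution on the score $s=\cos(\omega)$, and then feed the two noise levels into the preceding sorting-error Lemma, closing the argument with the monotonicity of $\Phi$. The key realization is that the Lemma already converts a noise standard deviation into a sorting-error probability, so once I show that the cosine scenario induces a \emph{smaller} effective score-noise than the direct scenario, the inequality $P_{\text{cos}}\le P_{\text{direct}}$ follows mechanically.

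First I would expand the noisy cosine about $\omega$ to first order,
\[
\tilde{s}=\cos(\omega+\delta\omega)=\cos(\omega)-\sin(\omega)\,\delta\omega+O(\delta\omega^{2}).
\]
Discarding the quadratic remainder, the fluctuation $\tilde{s}-s$ is a linear image of a Gaussian and is therefore itself Gaussian; since $\delta\omega\sim\mathcal{N}(0,\sigma_{\omega}^{2})$, this gives $\tilde{s}\sim\mathcal{N}\!\left(s,\sigma_{s}^{2}\right)$ with $\sigma_{s}^{2}=\sin^{2}(\omega)\,\sigma_{\omega}^{2}$. This reproduces exactly the variance identity stated in the theorem, and because $\sin^{2}(\omega)\le 1$ it immediately yields $\sigma_{s}^{2}\le\sigma_{\omega}^{2}$, with equality precisely when $\omega=\tfrac{\pi}{2}$.

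Next I would apply the earlier Lemma in both scenarios for a pair with true score gap $\Delta s_{ij}>0$ (the sign convention inherited from the Lemma). In the direct-noise scenario the score carries noise of standard deviation $\sigma_{\omega}$, so $P_{\text{direct}}=1-\Phi\!\left(\Delta s_{ij}/(\sqrt{2}\,\sigma_{\omega})\right)$; in the cosine scenario the effective score noise has standard deviation $\sigma_{s}=|\sin(\omega)|\,\sigma_{\omega}$, so $P_{\text{cos}}=1-\Phi\!\left(\Delta s_{ij}/(\sqrt{2}\,\sigma_{s})\right)$. Since $\sigma_{s}\le\sigma_{\omega}$ forces $\Delta s_{ij}/(\sqrt{2}\,\sigma_{s})\ge\Delta s_{ij}/(\sqrt{2}\,\sigma_{\omega})$ and $\Phi$ is strictly increasing, these two expressions give $P_{\text{cos}}\le P_{\text{direct}}$ at once, with equality at $\omega=\tfrac{\pi}{2}$.

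The step I expect to be the main obstacle is the linearization: the clean Gaussian conclusion, and hence the exact variance formula, relies on dropping the $O(\delta\omega^{2})$ term. I would justify this by restricting to the small-$\sigma_{\omega}$ regime that test-time optimization actually drives us toward, and by noting that the second-order correction only perturbs the mean and inflates the variance at a higher order, neither of which can reverse a strict inequality; a fully rigorous variant could instead bound the tail of $\cos(\omega+\delta\omega)$ directly rather than through its linearization. The only remaining care is bookkeeping on the sign of $\Delta s_{ij}$, which is handled by the without-loss-of-generality convention $s_{i}>s_{j}$.
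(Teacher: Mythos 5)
Your proposal is correct and follows essentially the same route as the paper's proof: a first-order Taylor expansion of $\cos(\omega+\delta\omega)$ yielding the effective score-noise variance $\sigma_{s}^{2}=\sin^{2}(\omega)\,\sigma_{\omega}^{2}\leq\sigma_{\omega}^{2}$, followed by the observation that smaller noise variance gives a smaller sorting-error probability. You are in fact slightly more explicit than the paper in invoking the preceding Lemma's formula together with the monotonicity of $\Phi$, and in flagging the small-$\sigma_{\omega}$ caveat behind the linearization, but the argument is the same.
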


\begin{proof}
Let $s=\cos(\omega)$ represent the cosine similarity between a test
data point $x_{\text{test}}$ and its nearest mode $x_{\text{test}}^{\text{mode}}$
, with $\omega$ as the angle between them. Adding Gaussian noise
$\delta\omega\sim\mathcal{N}(0,\sigma_{\omega}^{2})$ to $\omega$
results in a noisy score:

\[
\tilde{s}=\cos(\omega+\delta\omega)
\]

For an equivalent scenario with direct noise added to $s$ , the noisy
score is:

\[
\tilde{s}=s+\delta s,\quad\text{where }\delta s\sim\mathcal{N}(0,\sigma_{s}^{2})
\]

We aim to show that the probability of a sorting error in the cosine
distance scenario, $P_{\text{cos}}$, is upper-bounded by the probability
of a sorting error in the direct noise scenario, $P_{\text{direct}}$,
due to $\sigma_{s}^{2}=\sin^{2}(\omega)\cdot\sigma_{\omega}^{2}\leq\sigma_{\omega}^{2}$,
with equality when $\omega=\frac{\pi}{2}$ .

\textbf{Step 1: Taylor Expansion for} $\cos(\omega+\delta\omega)$

Using the Taylor expansion of $\cos(\omega+\delta\omega)$ around
$\omega$, we write:

\[
\cos(\omega+\delta\omega)\approx\cos(\omega)-\sin(\omega)\cdot\delta\omega-\frac{1}{2}\cos(\omega)\cdot(\delta\omega)^{2}+\cdots
\]

The dominant term affected by the noise $\delta\omega$ is:

$\tilde{s}\approx s-\sin(\omega)\cdot\delta\omega$,

where $s=\cos(\omega)$ . Thus, the effective noise in $\tilde{s}$
is approximately:

\[
\delta s\approx-\sin(\omega)\cdot\delta\omega
\]

Since $\delta\omega\sim\mathcal{N}(0,\sigma_{\omega}^{2})$, it follows
that $\delta s$ is Gaussian with mean 0 and variance $\sigma_{s}^{2}=\sin^{2}(\omega)\cdot\sigma_{\omega}^{2}$.

\textbf{Step 2: Probability of Sorting Error}

A sorting error occurs when the noisy score $\tilde{s}$ reverses
the true ordering of scores. Let two items have true scores $s_{1}$
and $s_{2}$ such that $s_{1}>s_{2}$ . Sorting errors occur when
$s_{1}{\prime}<s_{2}{\prime}$ , or equivalently:

\[
\delta s_{1}-\delta s_{2}>s_{1}-s_{2}
\]

For the cosine distance scenario, the variance of $\delta s$ is reduced
by the factor $\sin^{2}(\omega)$ compared to direct noise. Specifically:

\[
\sigma_{s}^{2}=\sin^{2}(\omega)\cdot\sigma_{\omega}^{2}\leq\sigma_{\omega}^{2}
\]

As the variance of noise decreases, the probability of large deviations
from the true score ordering also decreases. Therefore, the probability
of a sorting error in the cosine distance scenario, $P_{\text{cos}}$,
is upper-bounded by the probability of a sorting error in the direct
noise scenario, $P_{\text{direct}}$, where $P_{\text{cos}}\leq P_{\text{direct}}$.
Equality holds when $\sin^{2}(\omega)=1$ , which occurs when $\omega=\frac{\pi}{2}$. 
\end{proof}
\section{Ablation Studies}
\subsection{Effect of Softmax Temperature ($T$) and Regularization ($\lambda$) on Loss Convergence}
We conducted ablation studies by varying  the softmax temperature $T$ from $5.0$ to $100.0$ as shown in Fig \ref{fig:convergence}(a) and $\lambda$ from $0.0001$ to $0.1$ in Fig \ref{fig:convergence}(b), with the number of optimization epochs set to 10k. The set $\lambda$ for varying $T$ is $0.001$ and the set softmax temperature $T$ for varying $\lambda$ is $5$. While the final convergence loss values differ across settings, both figures (Fig \ref{fig:convergence}(a) and Fig \ref{fig:convergence}(b)) clearly show that convergence typically occurs within the first few hundred epochs well before the 10k mark. This indicates that our proposed method, TRUST, converges efficiently and is computationally less intensive.

\begin{figure*}
\subfloat [\label{fig:Varying t}]
{
\includegraphics[width=0.5\columnwidth]{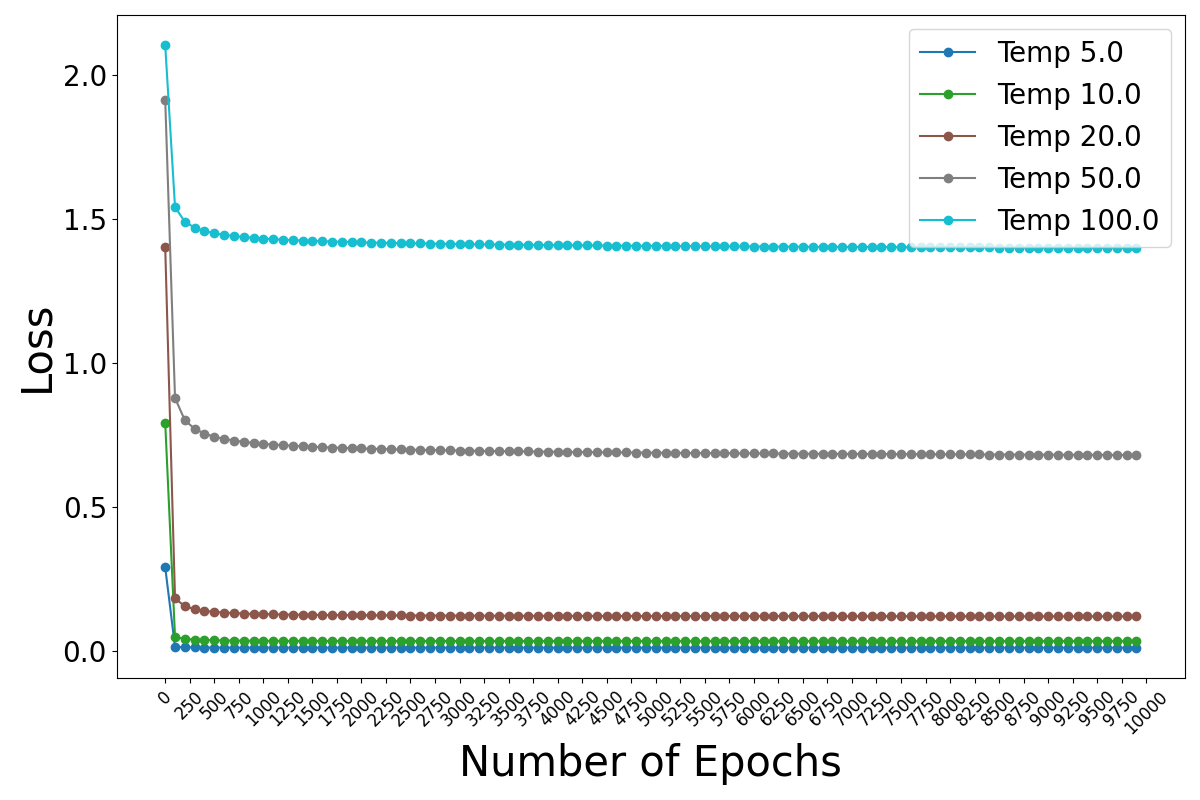}
}
\subfloat [\label{fig:Varying lambda}]
{
\includegraphics[width=0.5\columnwidth]{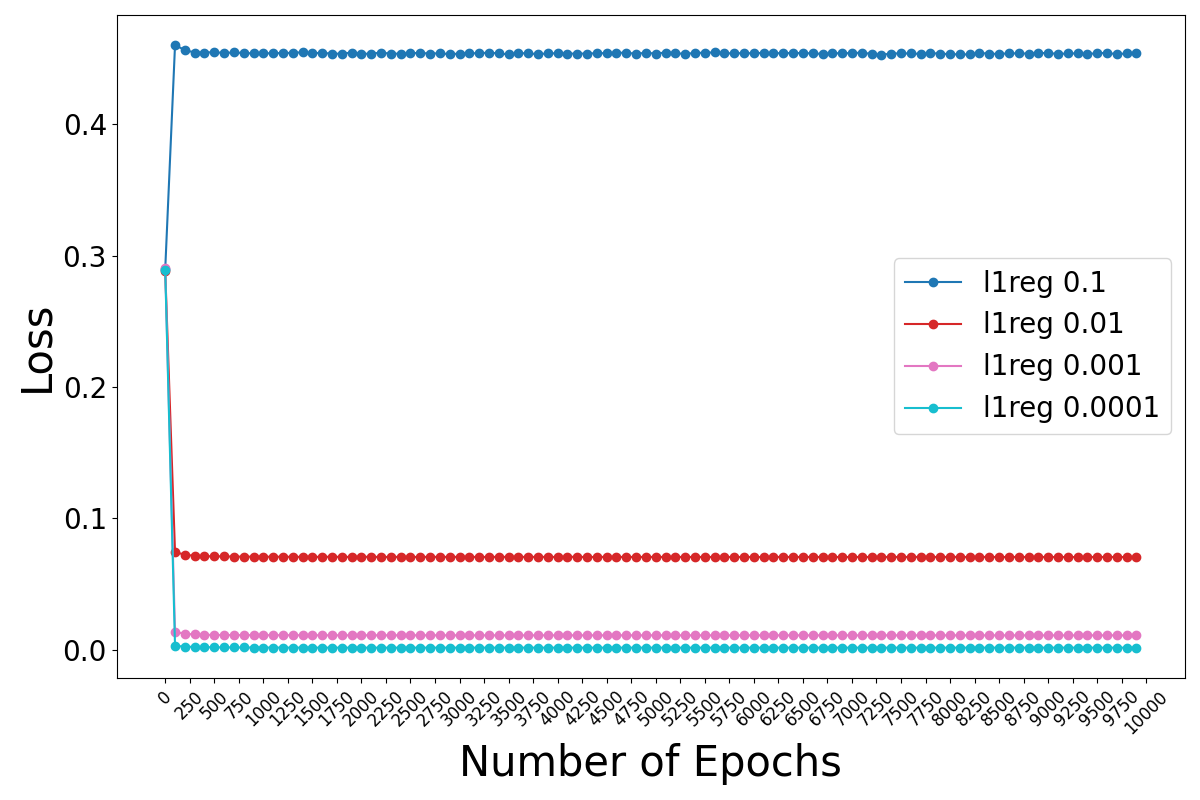}
}
\caption{Convergence of the loss on the CIFAR-10 dataset for varying softmax temperature ($T$) and regularization parameter ($\lambda$).}\label{fig:convergence}
\end{figure*}

\begin{figure*}
\subfloat [\label{fig:b-1}]
{
\includegraphics[width=0.5\columnwidth]{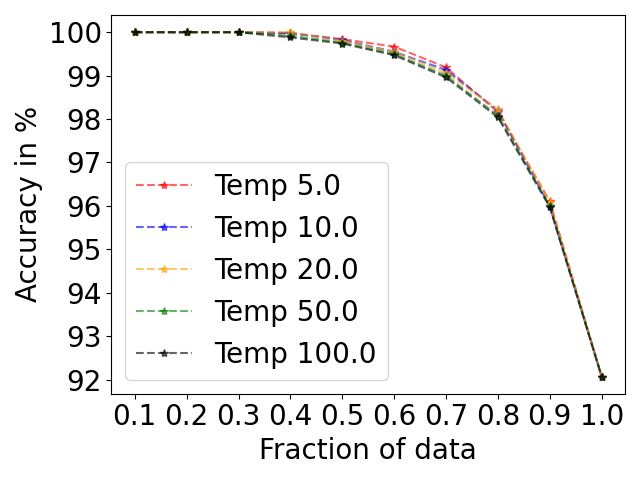}
}
\subfloat [\label{fig:b-1}]
{
\includegraphics[width=0.5\columnwidth]{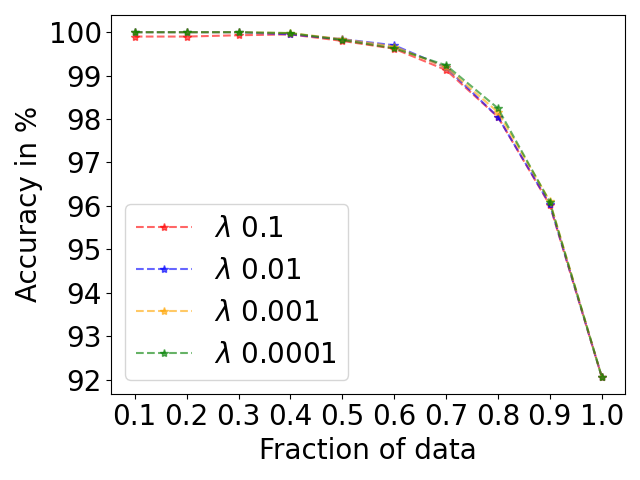}
}
\caption{Effect of varying softmax temperature ($T$) and regularization parameter ($\lambda$) on TRUST score-based accuracies for CIFAR-10.}\label{fig:accs_varyall}
\end{figure*}
\subsection{Effect of Softmax Temperature ($T$) and Regularization ($\lambda$) on TRUST Score based Accuracy}
We computed the accuracy over increasingly confident subsets of the test set starting from the full dataset and progressively narrowing down to the top 10\% most confident samples based on TRUST scores. The results, shown in Fig \ref{fig:accs_varyall}(a) and Fig \ref{fig:accs_varyall}(b), correspond to experiments where the softmax temperature $T$ was varied from 5.0 to 100.0 with $\lambda$ fixed at 0.001, and where $\lambda$ was varied from 0.0001 to 0.1 with $T$ fixed at 5.0, respectively.

From both figures, it is evident that variations in $T$ and $\lambda$ have only a minor impact on accuracy. A slight decrease in accuracy is observed at higher temperatures (e.g., $T = 100.0$ in Fig \ref{fig:accs_varyall}(a)), and similarly, higher values of $\lambda$ lead to modest accuracy drops in Fig \ref{fig:accs_varyall}(b). These declines are expected, as the corresponding configurations exhibit higher convergence loss values, as previously shown in Fig \ref{fig:convergence}(a) and Fig \ref{fig:convergence}(b).

\subsection{Effect of Fixed $T = 5.0$ and $\lambda = 0.001$ on TRUST Score based Accuracy}

We computed the accuracy over increasingly confident subsets of the test set ranging from the full dataset to the top 10\% most confident samples using TRUST scores. The plots in Fig.\ref{fig:accs_fixtlambda}(a) and Fig.\ref{fig:accs_fixtlambda}(b) are based on our selected hyperparameters: softmax temperature $T = 5.0$ and regularization parameter $\lambda = 0.001$. These plots show accuracy trends across the first 100 training epochs (in increments of 10) and the first 1000 epochs (in increments of 100), respectively. The corresponding numerical values for Fig.\ref{fig:accs_fixtlambda}(a) are presented in Table\ref{tab:top_percentage_acc_epoch100}, while the overall performance across all trained models and datasets is reported in Table~\ref{tab:Performance}. These results demonstrate early convergence, which in turn reduces additional computational overhead.

\begin{figure*}
\subfloat [\label{fig:trust_epoch100}]
{
\includegraphics[width=0.5\columnwidth]{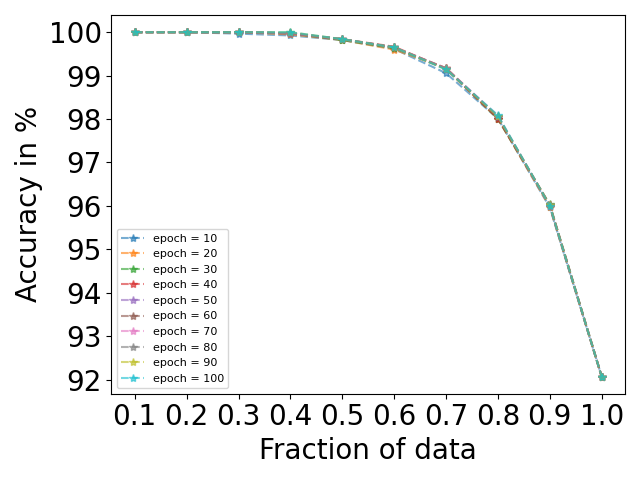}
}
\subfloat [\label{fig:trust_epoch1000}]
{
\includegraphics[width=0.5\columnwidth]{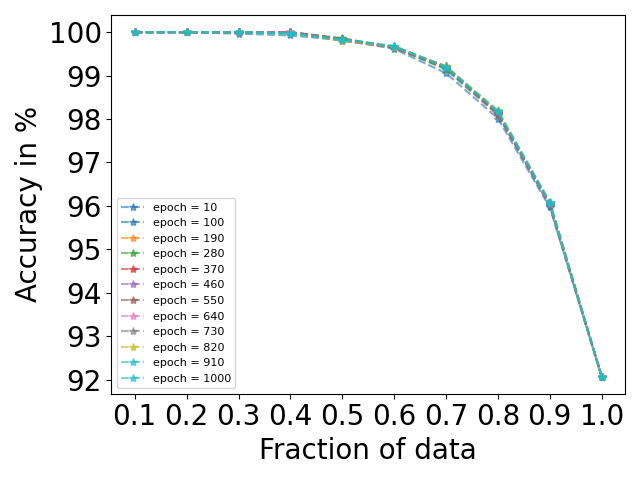}
}
\caption{TRUST score-based accuracy comparison on CIFAR-10 with $T = 5.0$ and $\lambda = 0.001$: (a) from 10 to 100 training epochs (in increments of 10), and (b) from 100 to 1000 training epochs (in increments of 100).}\label{fig:accs_fixtlambda}
\end{figure*}

\begin{table*}
\begin{centering}
{\scriptsize{}%
\begin{tabular}{p{1.75cm}cccccccccc}
\toprule 
\multirow{2}{*}{{\scriptsize Epoch Number}} & \multicolumn{10}{c}{{\scriptsize Accuracy @ Top-\% Data (by TRUST Score) $\uparrow$}} \tabularnewline
\cmidrule{2-11}
 & {\scriptsize 10} & {\scriptsize 20} & {\scriptsize 30} & {\scriptsize 40} & {\scriptsize 50} & {\scriptsize 60} & {\scriptsize 70} & {\scriptsize 80} & {\scriptsize 90} & {\scriptsize 100} \tabularnewline
\midrule
\cmidrule{1-11}
{\scriptsize 10} & {\scriptsize 100.0} & {\scriptsize 100.0} & {\scriptsize 99.97} & {\scriptsize 99.93} & {\scriptsize 99.82} & {\scriptsize 99.62} & {\scriptsize 99.06}  & {\scriptsize 98.01} & {\scriptsize 95.97} & {\scriptsize 92.06}
\tabularnewline
\cmidrule{1-11}
{\scriptsize 20} & {\scriptsize 100.0} & {\scriptsize 100.0} & {\scriptsize 100.0} & {\scriptsize 99.95} & {\scriptsize 99.82} & {\scriptsize 99.60} & {\scriptsize 99.17} & {\scriptsize 98.01} & {\scriptsize 95.98} & {\scriptsize 92.06}
 \tabularnewline
\cmidrule{1-11}
{\scriptsize 30} & {\scriptsize  100.0}& {\scriptsize 100.0} & {\scriptsize 100.0} & {\scriptsize 99.98} & {\scriptsize 99.82} & {\scriptsize 99.63} & {\scriptsize  99.17} & {\scriptsize 98.0} & {\scriptsize 96.03} & {\scriptsize 92.06}
 \tabularnewline
\cmidrule{1-11}
{\scriptsize 40} & {\scriptsize  100.0} & {\scriptsize 100.0} & {\scriptsize 100.0} & {\scriptsize 99.98} & {\scriptsize 99.84} & {\scriptsize 99.67} & {\scriptsize 99.16} & {\scriptsize 98.01} & {\scriptsize 96.02} & {\scriptsize 92.06}
 \tabularnewline
\cmidrule{1-11}
{\scriptsize 50} & {\scriptsize  100.0} & {\scriptsize 100.0} & {\scriptsize 100.0} & {\scriptsize 99.98} & {\scriptsize 99.84} & {\scriptsize 99.65} & {\scriptsize 99.16} & {\scriptsize 98.05} & {\scriptsize 96.01} & {\scriptsize 92.06}
 \tabularnewline
\cmidrule{1-11}
{\scriptsize 60} & {\scriptsize  100.0} & {\scriptsize 100.0} & {\scriptsize 100.0} & {\scriptsize 99.98} & {\scriptsize 99.84} & {\scriptsize 99.65} & {\scriptsize 99.17} & {\scriptsize 98.05} & {\scriptsize 96.0} & {\scriptsize 92.06}
 \tabularnewline
\cmidrule{1-11}
{\scriptsize 70} & {\scriptsize  100.0} & {\scriptsize 100.0} & {\scriptsize 100.0} & {\scriptsize 99.98} & {\scriptsize 99.84} & {\scriptsize 99.65} & {\scriptsize 99.17} & {\scriptsize 98.075} & {\scriptsize 96.0} & {\scriptsize 92.06}
 \tabularnewline
\cmidrule{1-11}
{\scriptsize 80} & {\scriptsize  100.0} & {\scriptsize 100.0} & {\scriptsize 100.0} & {\scriptsize 99.98} & {\scriptsize 99.84} & {\scriptsize 99.65} & {\scriptsize 99.16} & {\scriptsize 98.06} & {\scriptsize 96.01} & {\scriptsize 92.06}
\tabularnewline
\cmidrule{1-11}
{\scriptsize 90} & {\scriptsize  100.0} & {\scriptsize 100.0} & {\scriptsize 100.0} & {\scriptsize 100.0} & {\scriptsize 99.84} & {\scriptsize 99.65} & {\scriptsize 99.14} & {\scriptsize 98.06} & {\scriptsize 96.01} & {\scriptsize 92.06}
\tabularnewline
\cmidrule{1-11}
{\scriptsize 100} & {\scriptsize  100.0} & {\scriptsize 100.0} & {\scriptsize 100.0} & {\scriptsize 100.0} & {\scriptsize 99.84} & {\scriptsize 99.65} & {\scriptsize 99.14} & {\scriptsize 98.09} & {\scriptsize 96.0} & {\scriptsize 92.06}
\tabularnewline
\bottomrule
\end{tabular}}{\scriptsize\par}
\par\end{centering}
\caption{Accuracy over the top-$k$ test samples, ranked by TRUST score, on the CIFAR-10 dataset with softmax temperature $T = 5.0$ and regularization parameter $\lambda = 0.001$ for the first training 100 epochs (in increments of 10). The values reported in the table correspond to those plotted in Fig \ref{fig:accs_fixtlambda}(a).}
\label{tab:top_percentage_acc_epoch100}
\end{table*}

\begin{table*}
\begin{centering}
{\footnotesize{}%
\begin{tabular}{@{}l@{}lc@{}}
\toprule 
{\footnotesize Dataset} & {\footnotesize Model} & {\footnotesize Accuracy (\%)}\tabularnewline
\midrule 
\multirow{5}{*}{{\footnotesize CIFAR-10}} & {\footnotesize SimpleNet} & {\footnotesize 74.0}\tabularnewline
 & {\footnotesize SimpleNet+} & {\footnotesize 84.0}\tabularnewline
 & {\footnotesize VGG11} & {\footnotesize 89.0}\tabularnewline
 & {\footnotesize PreactResNet18 (dropout)} & {\footnotesize 92.06 (89.0)}\tabularnewline
 & {\footnotesize ViT-Base} & {\footnotesize 71.61}\tabularnewline
\multirow{1}{*}{{\footnotesize CAMELYON-17}} & {\footnotesize PreactResNet18(dropout)} & {\footnotesize 83.52 (85.24)}\tabularnewline
\multirow{1}{*}{{\footnotesize TinyImagenet}} & {\footnotesize ResNet50 (dropout)} & 76.29 ({\footnotesize 81.71)}\tabularnewline
{\footnotesize Imagenet} & {\footnotesize ViT-Base}{\footnotesize\par}

{\footnotesize (pre-trained)} & {\footnotesize 85.62}\tabularnewline
\bottomrule
\end{tabular}}{\footnotesize\par}
\par\end{centering}
\caption{Accuracy of different trained models across various test datasets.}\label{tab:Performance}
\end{table*}

\section{Additional analysis on TRUST scores}

\begin{figure}
    \centering
\includegraphics[width=0.5\linewidth]{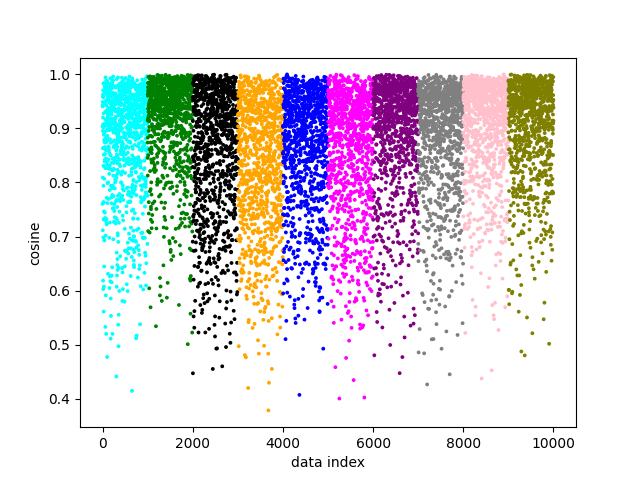}
    \caption{TRUST score scatter plots for CIFAR-10 test samples on the PreactResNet18 model, colored by class (Class 0 to 9: cyan to olive green).}
    \label{fig:scatter}
\end{figure}

\begin{figure*}
\centering
\subfloat[Class 0]{\includegraphics[scale=0.5]{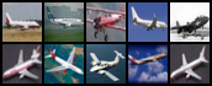}
}~~\subfloat[Class 1]{\includegraphics[scale=0.5]{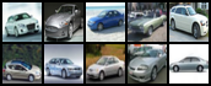}
}~~\subfloat[Class 2]{\includegraphics[scale=0.5]{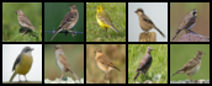}
}~~\subfloat[Class 3]{\includegraphics[scale=0.5]{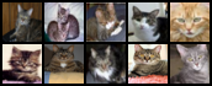}
}

~~\subfloat[Class 4]{\includegraphics[scale=0.5]{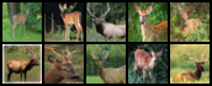}
}
\subfloat[Class 5]{\includegraphics[scale=0.5]{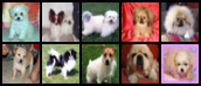}
}~~\subfloat[Class 6]{\includegraphics[scale=0.5]{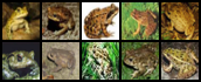}
}~~\subfloat[Class 7]{\includegraphics[scale=0.5]{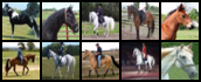}
}

~~\subfloat[Class 8]{\includegraphics[scale=0.5]{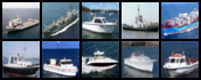}
}
~~\subfloat[Class 9]{\includegraphics[scale=0.5]{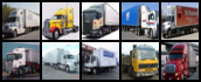}
}\caption{Mode samples with highest TRUST scores for each class (Class 0 to Class 9) in CIFAR-10.}\label{fig:Mode-(high-TRUST score)}
\end{figure*}

\begin{figure*}
\centering
\subfloat[Class 0]{\includegraphics[scale=0.5]{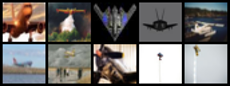}
}~~\subfloat[Class 1]{\includegraphics[scale=0.5]{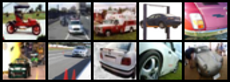}
}~~\subfloat[Class 2]{\includegraphics[scale=0.5]{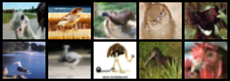}
}~~\subfloat[Class 3]{\includegraphics[scale=0.5]{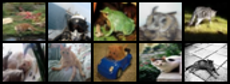}
}

~~\subfloat[Class 4]{\includegraphics[scale=0.5]{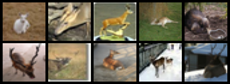}
}
\subfloat[Class 5]{\includegraphics[scale=0.5]{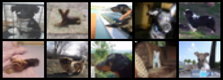}
}~~\subfloat[Class 6]{\includegraphics[scale=0.5]{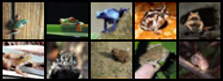}
}~~\subfloat[Class 7]{\includegraphics[scale=0.5]{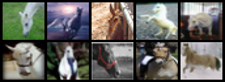}
}

~~\subfloat[Class 8]{\includegraphics[scale=0.5]{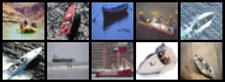}
}~~\subfloat[Class 9]{\includegraphics[scale=0.5]{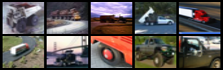}
}\caption{Mode samples with lowest
TRUST scores for each class (Class 0 to Class 9) in CIFAR-10.}\label{fig:Tail-or-rare}
\end{figure*}

\begin{figure*}[p]
\centering
\subfloat [\label{fig:b-1}]
{
\includegraphics[width=0.35\columnwidth]{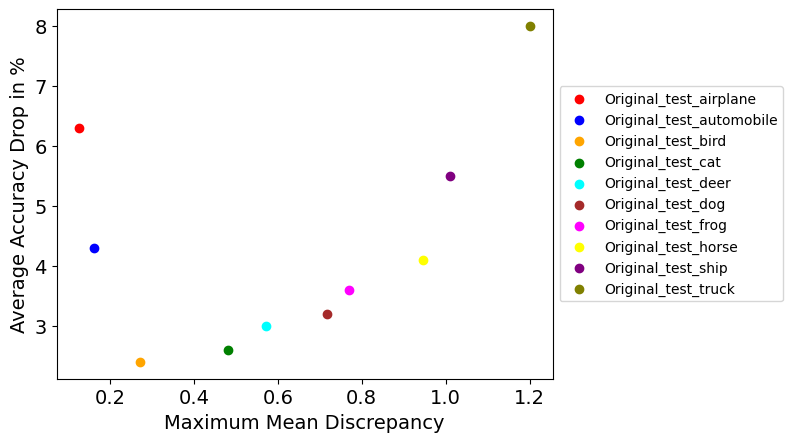}
}
\subfloat [\label{fig:b-1}]
{
\includegraphics[scale=0.3]{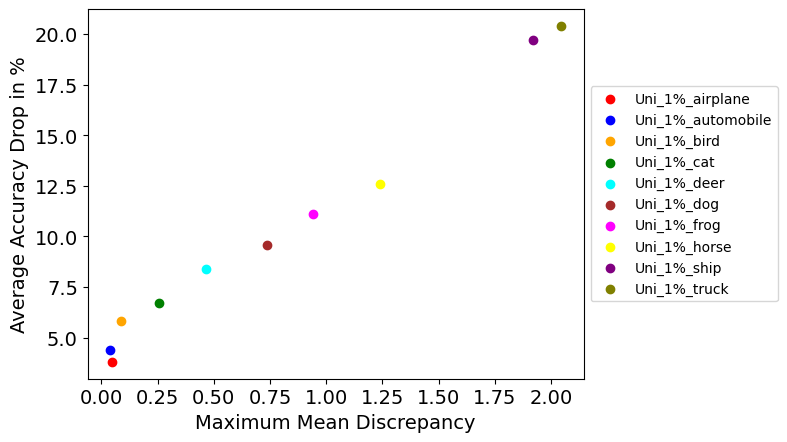}
}

\subfloat [\label{fig:b-1}]
{
\includegraphics[scale=0.3]{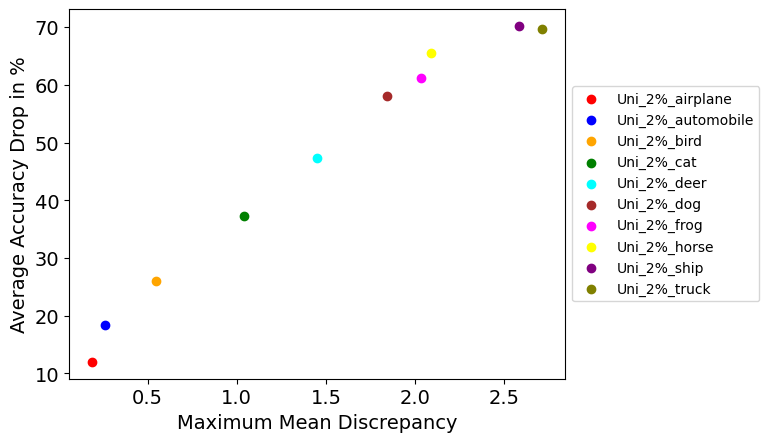}
}
\subfloat [\label{fig:b-1}]
{
\includegraphics[scale=0.3]{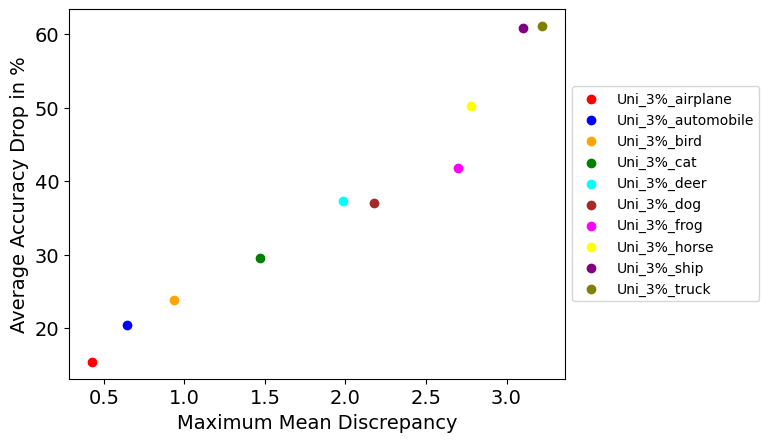}
}

\subfloat [\label{fig:b-1}]
{
\includegraphics[scale=0.3]{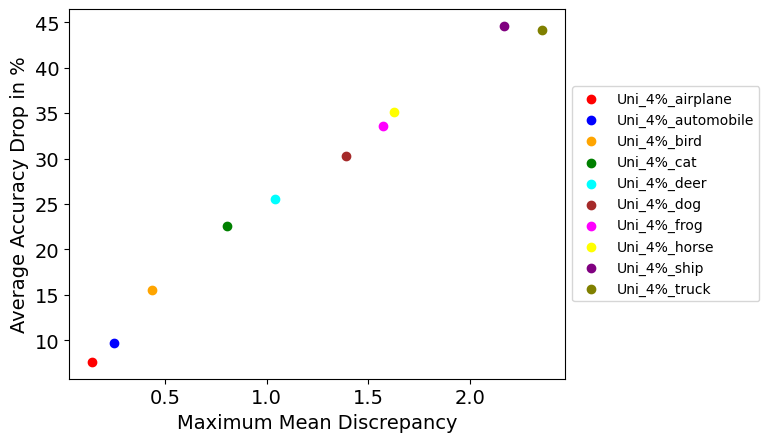}
}
\subfloat [\label{fig:b-1}]
{
\includegraphics[scale=0.3]{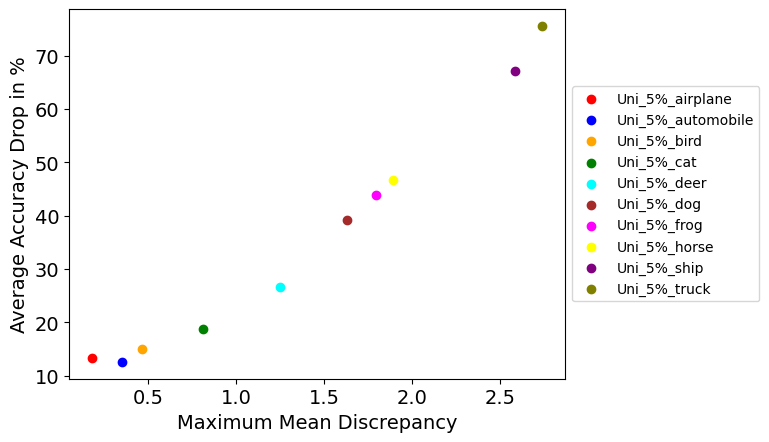}
}

\subfloat [\label{fig:b-1}]
{
\includegraphics[scale=0.3]{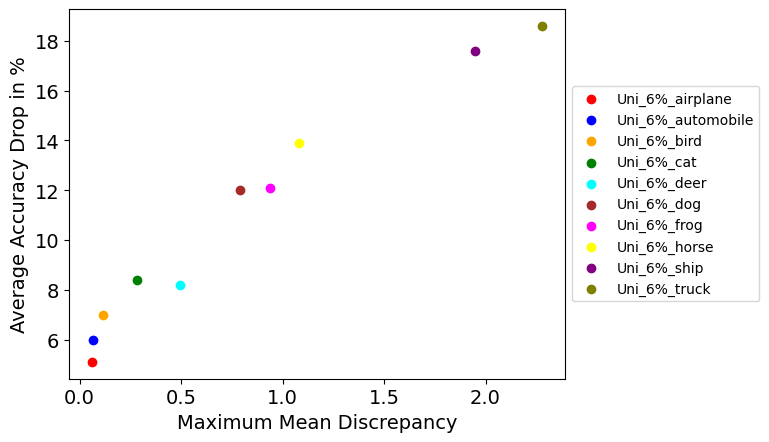}
}
\subfloat [\label{fig:b-1}]
{
\includegraphics[scale=0.3]{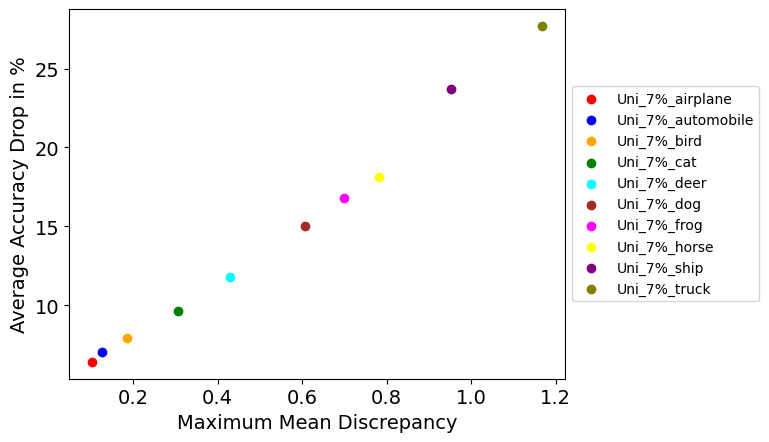}
}

\subfloat [\label{fig:b-1}]
{
\includegraphics[scale=0.3]{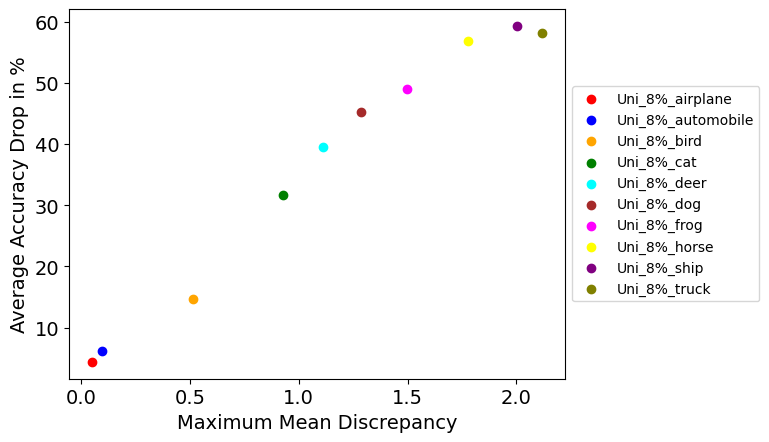}
}
\subfloat [\label{fig:b-1}]
{
\includegraphics[scale=0.3]{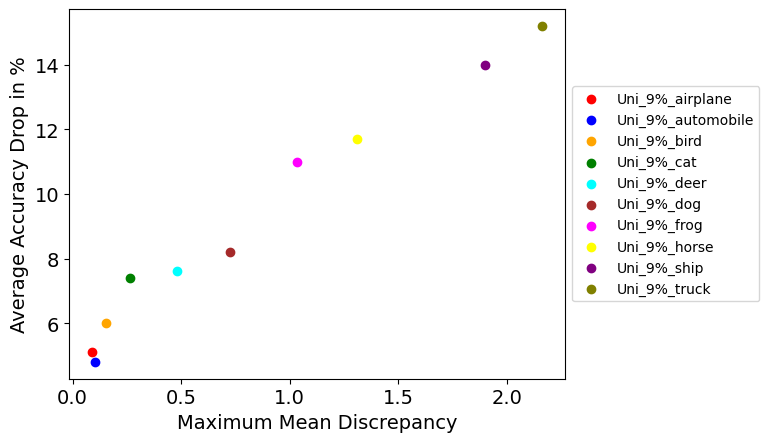}
}
\caption{Accuracy drop vs MMD for original test and uniform noise from 1\% to
\% 9\% for CIFAR-10 PreactResNet18 model.}
\label{fig:Accuracy-drop-vs_unif}
\end{figure*}

\begin{figure*}
    \centering
    \subfloat [\label{fig:b-1}]
{
\includegraphics[scale=0.3]{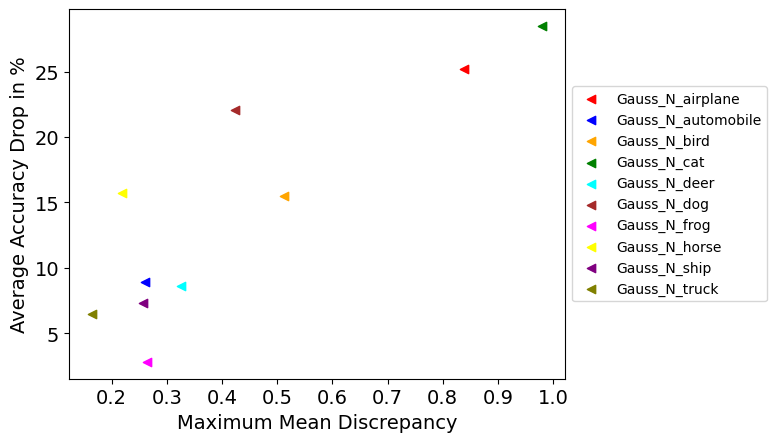}
}
\subfloat [\label{fig:b-1}]
{
\includegraphics[scale=0.3]{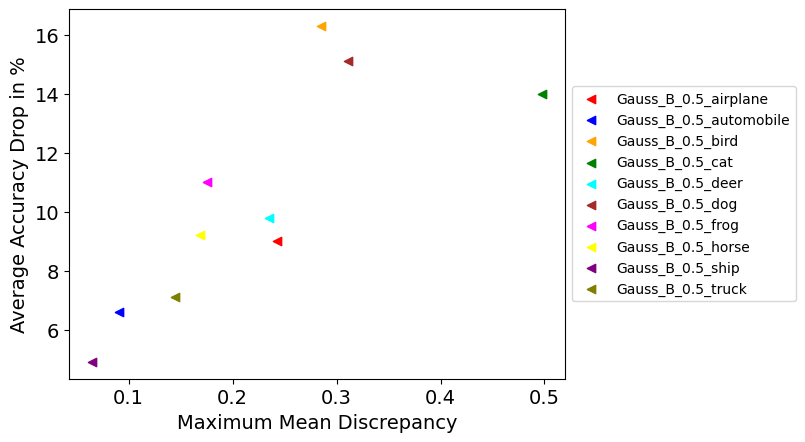}
}

\subfloat [\label{fig:b-1}]
{
\includegraphics[scale=0.3]{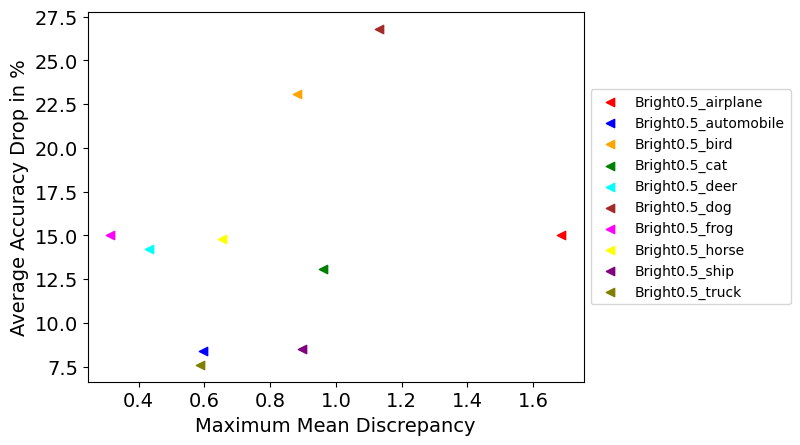}
}
\subfloat [\label{fig:b-1}]
{
\includegraphics[scale=0.3]{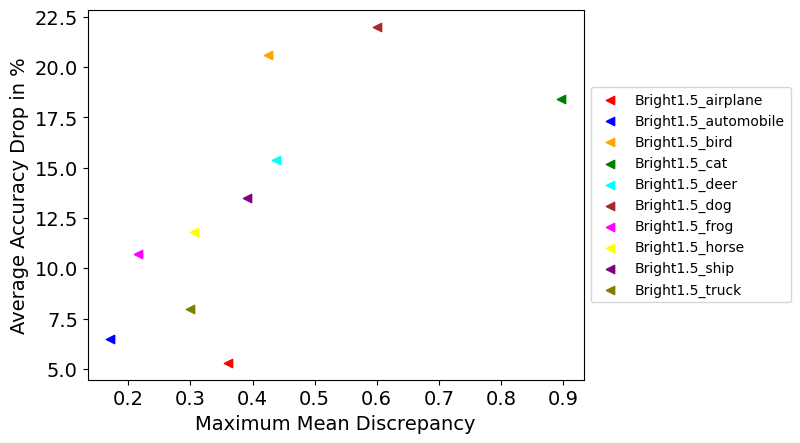}
}

\subfloat [\label{fig:b-1}]
{
\includegraphics[scale=0.3]{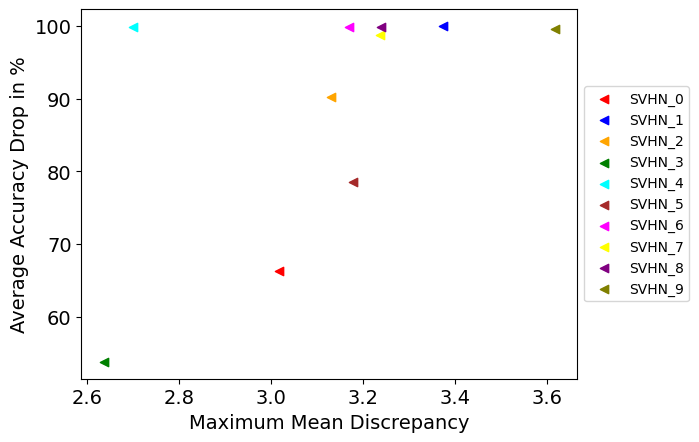}
}
\caption{Accuracy drop vs MMD Gaussian noise, Gaussian blur, different brightness and SVHN dataset for CIFAR-10 PreactResNet18 model.}
\label{fig:Accuracy-drop-vs_rest}
\end{figure*}

\begin{figure*}
\centering
\subfloat[Class 0]{\includegraphics[scale=0.2]{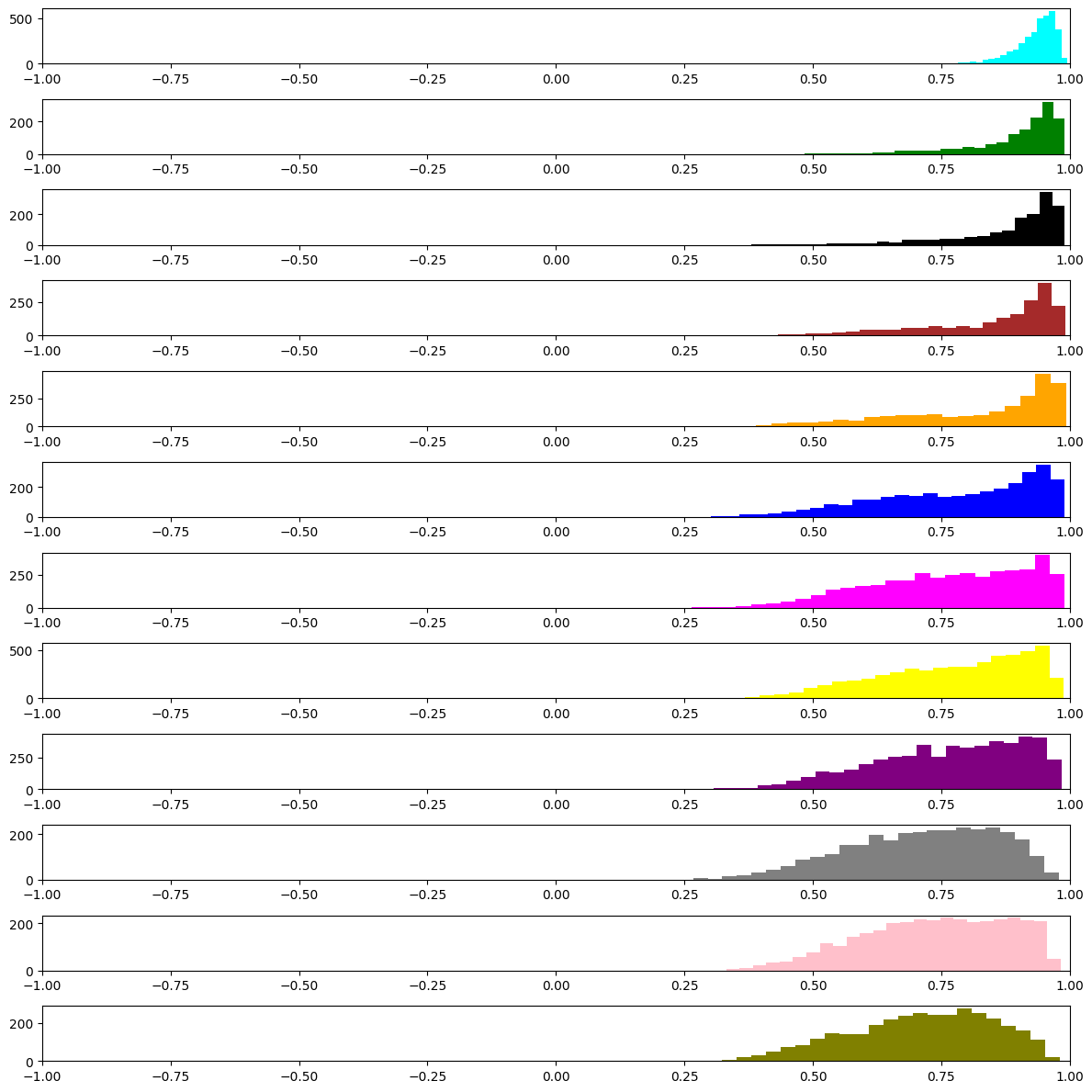}
}~~~~~~\subfloat[Class 1]{\includegraphics[scale=0.2]{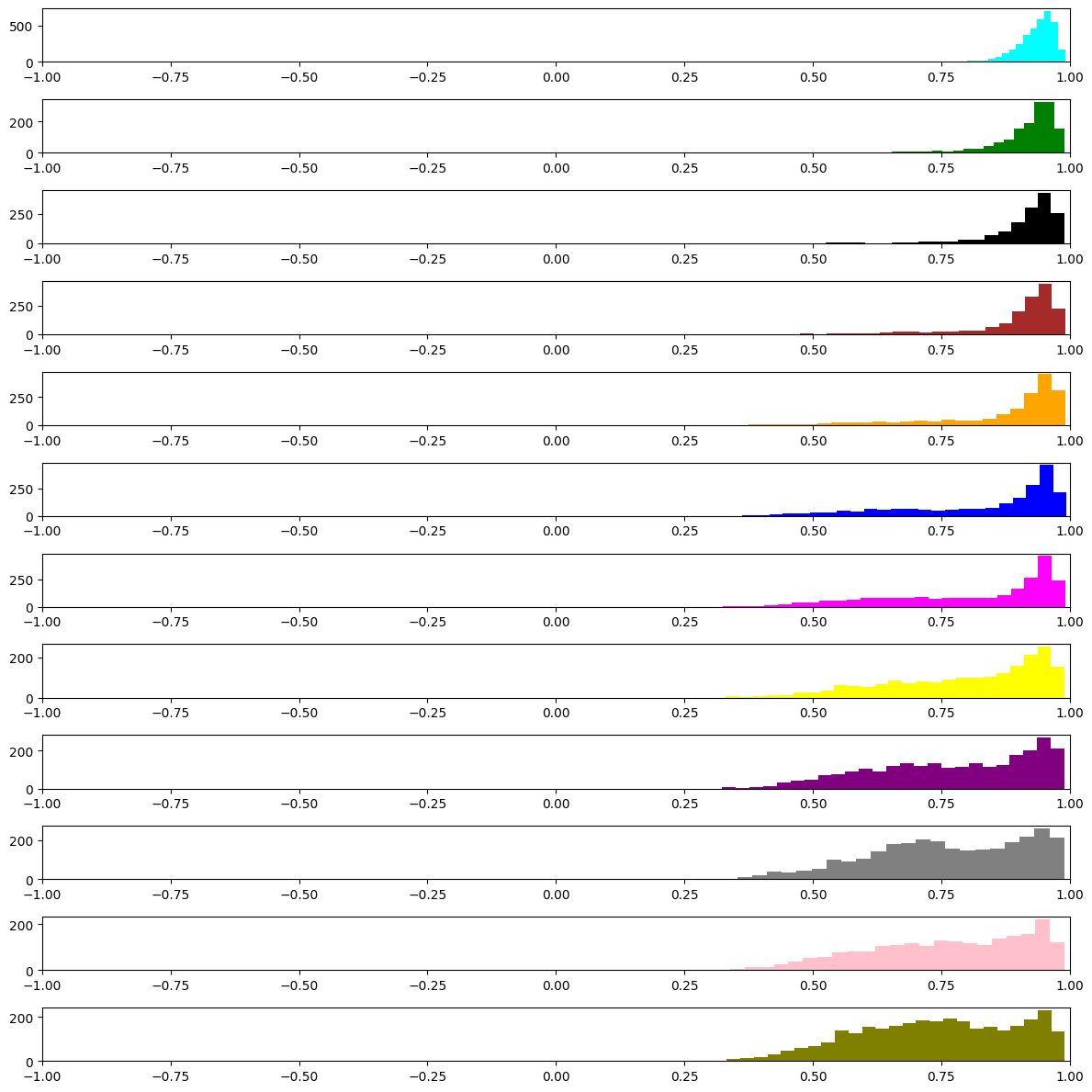}

}\\
\subfloat[Class 2]{\includegraphics[scale=0.2]{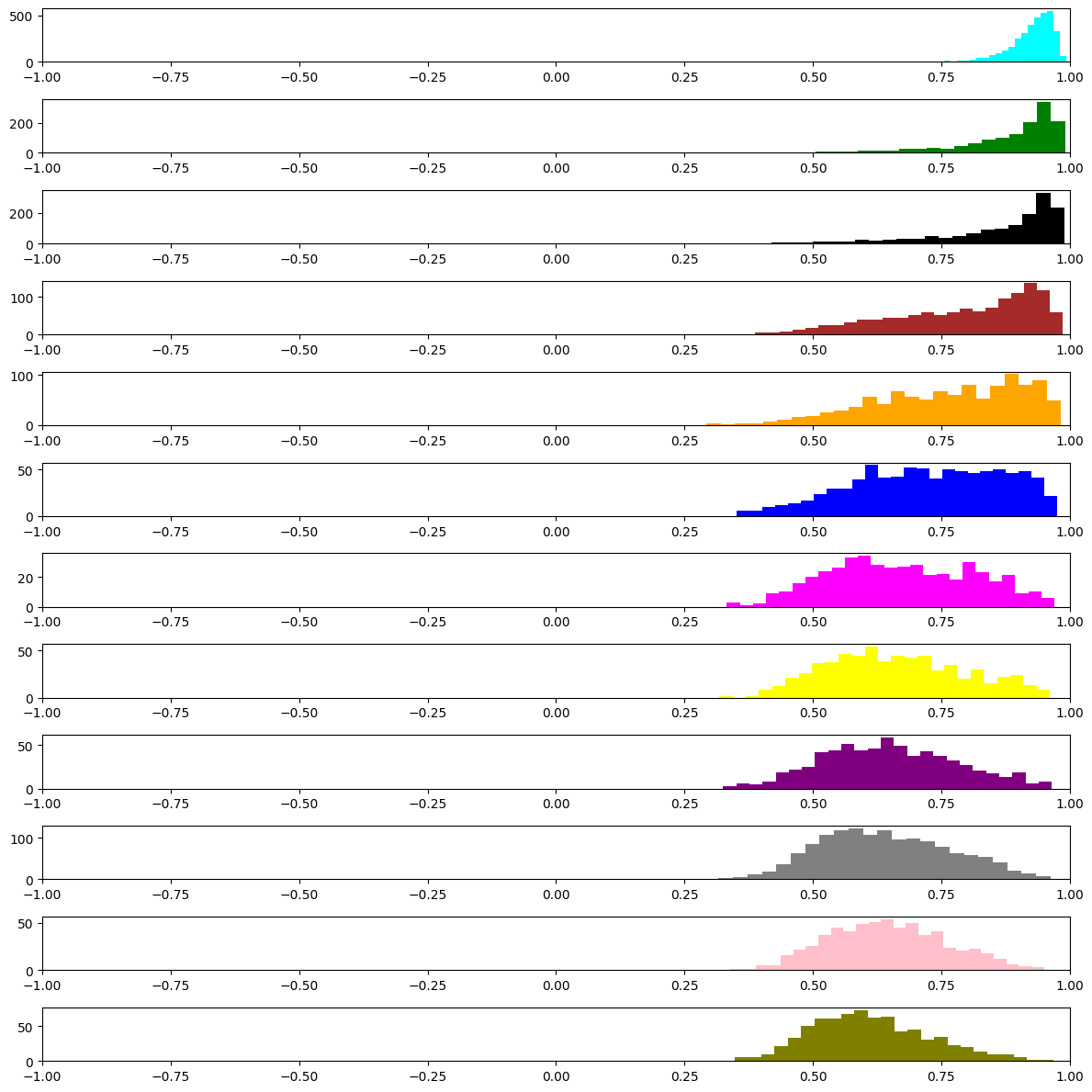}

}~~~~~~\subfloat[Class 3]{\includegraphics[scale=0.2]{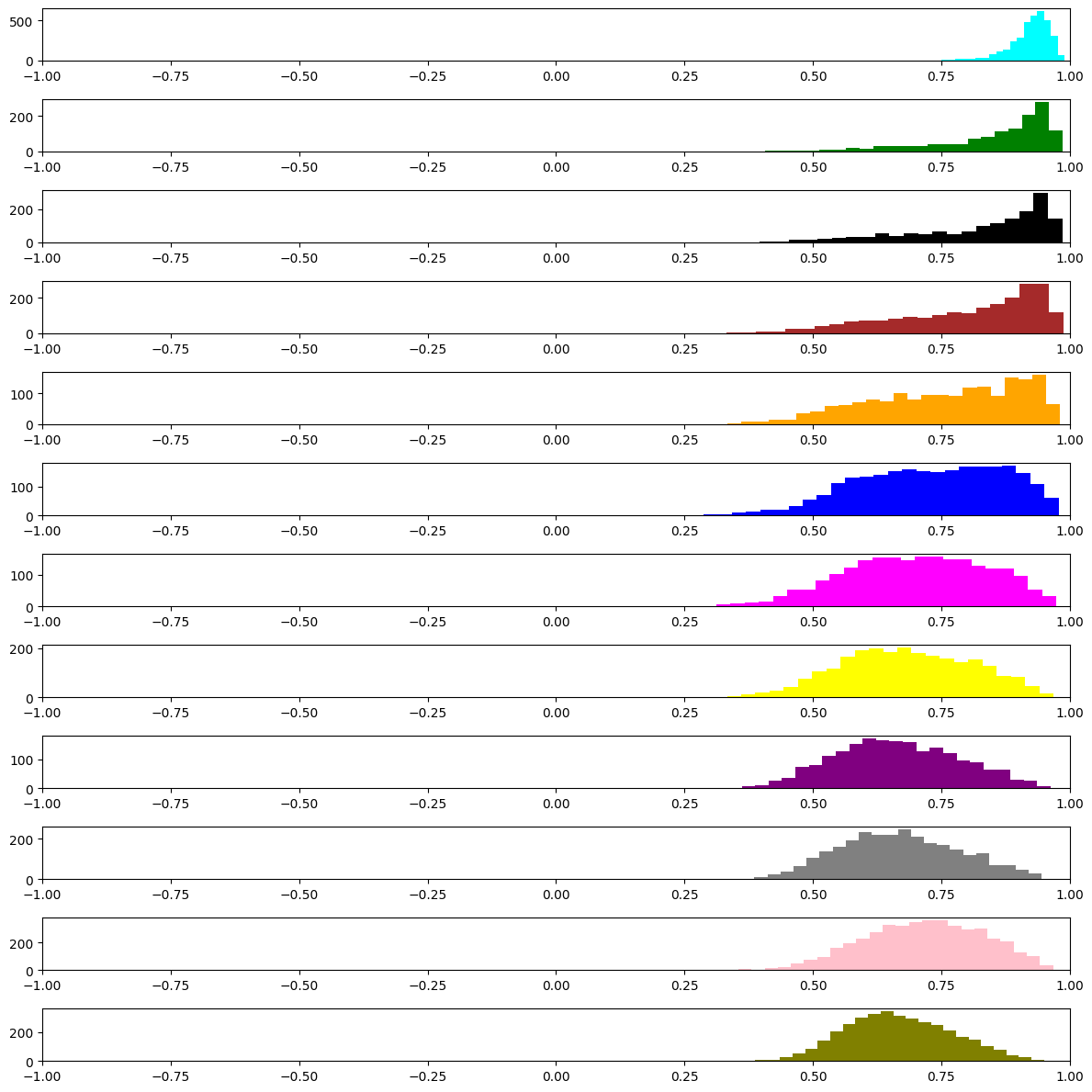}

}\\
\subfloat[Class 4]{\includegraphics[scale=0.2]{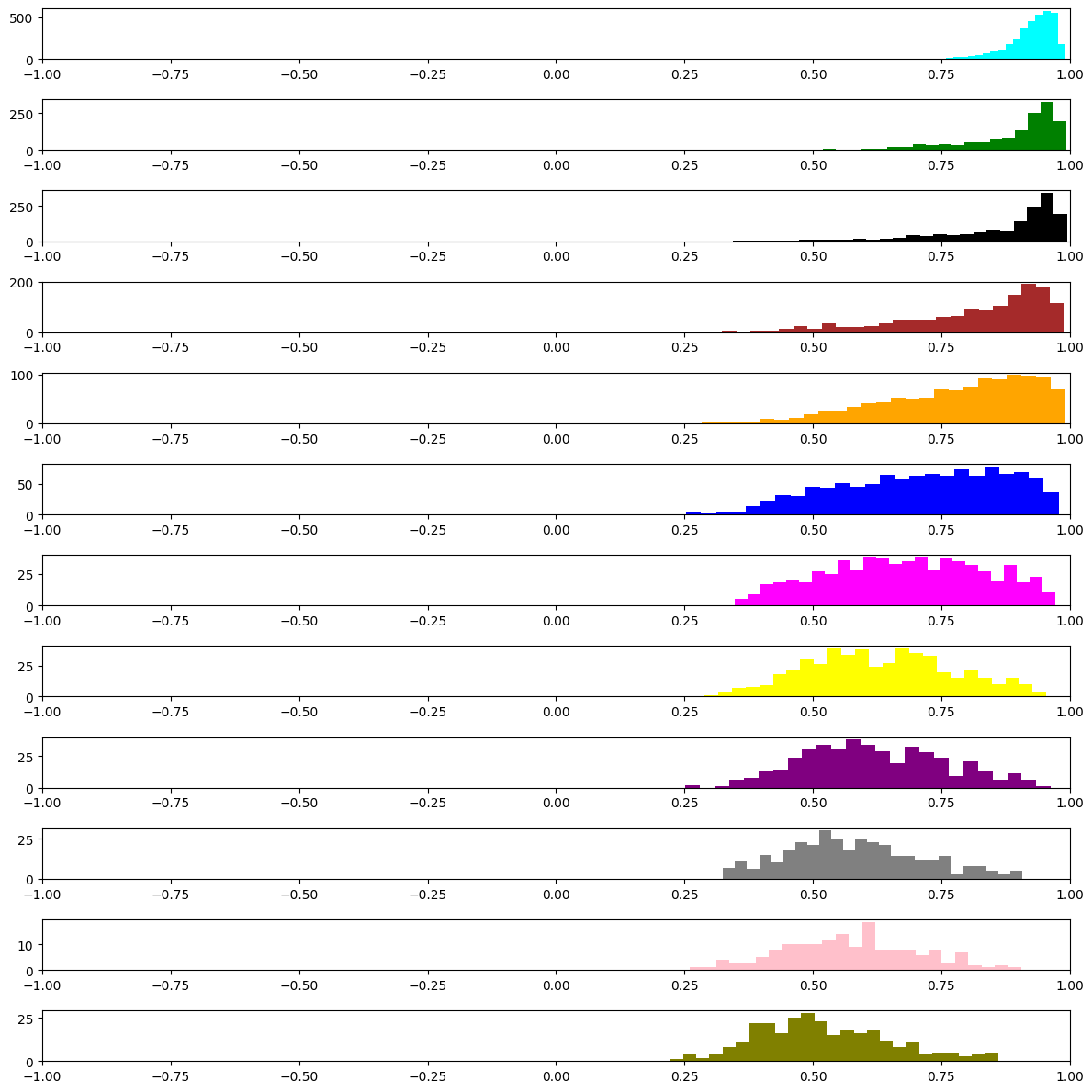}

}~~~~~~\subfloat[Class 5]{\includegraphics[scale=0.2]{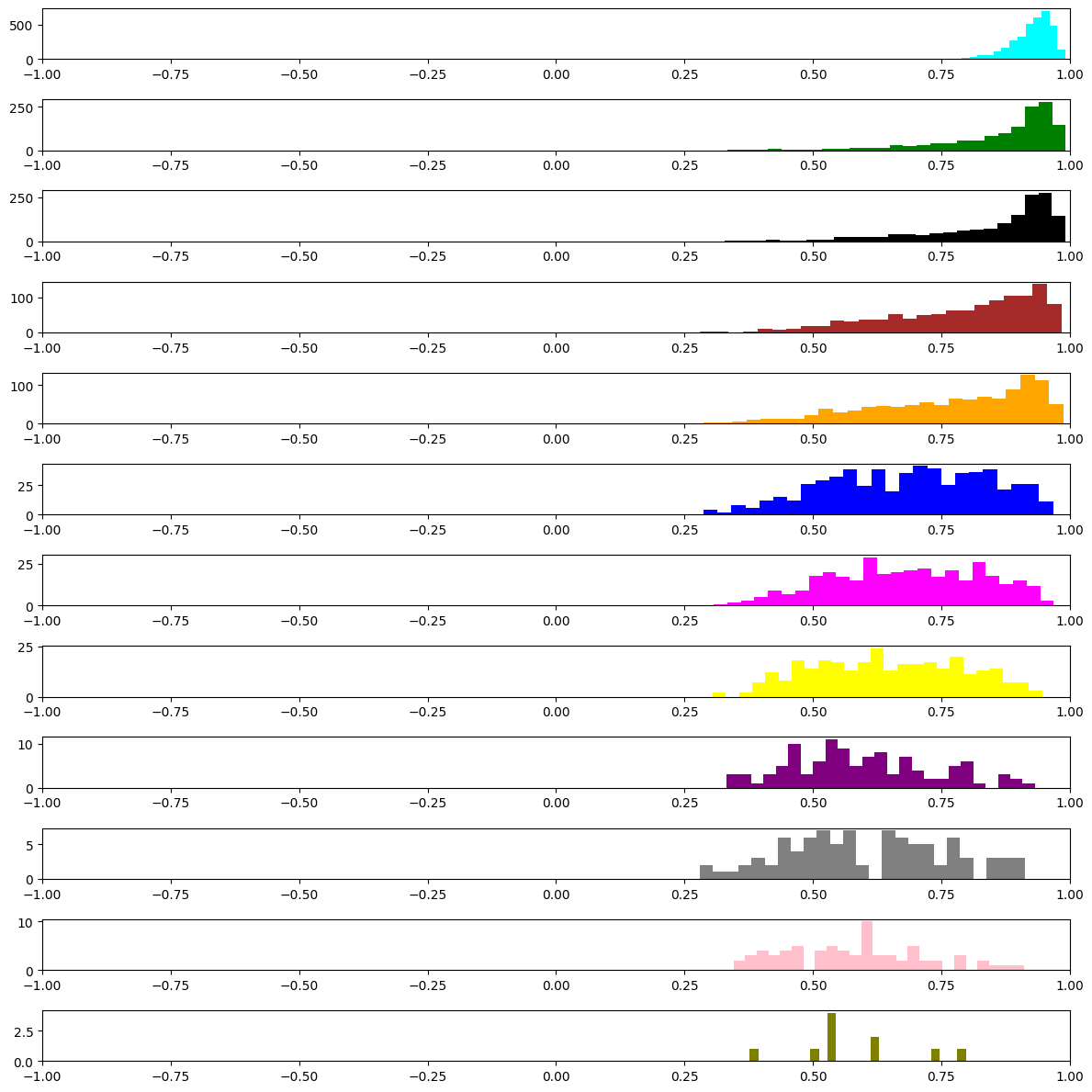}

}\\
\caption{Class-wise (Class 0 to Class 5) TRUST score distribution of CIFAR-10
train (in cyan), test (in green) and test with noises (uniform noises
from 0\% (in black) to 9\% (in olive) based on PreactResNet18 model.}\label{fig:trust_score}
\end{figure*}
\begin{figure*}
\centering
\subfloat[Class 6]{\includegraphics[scale=0.2]{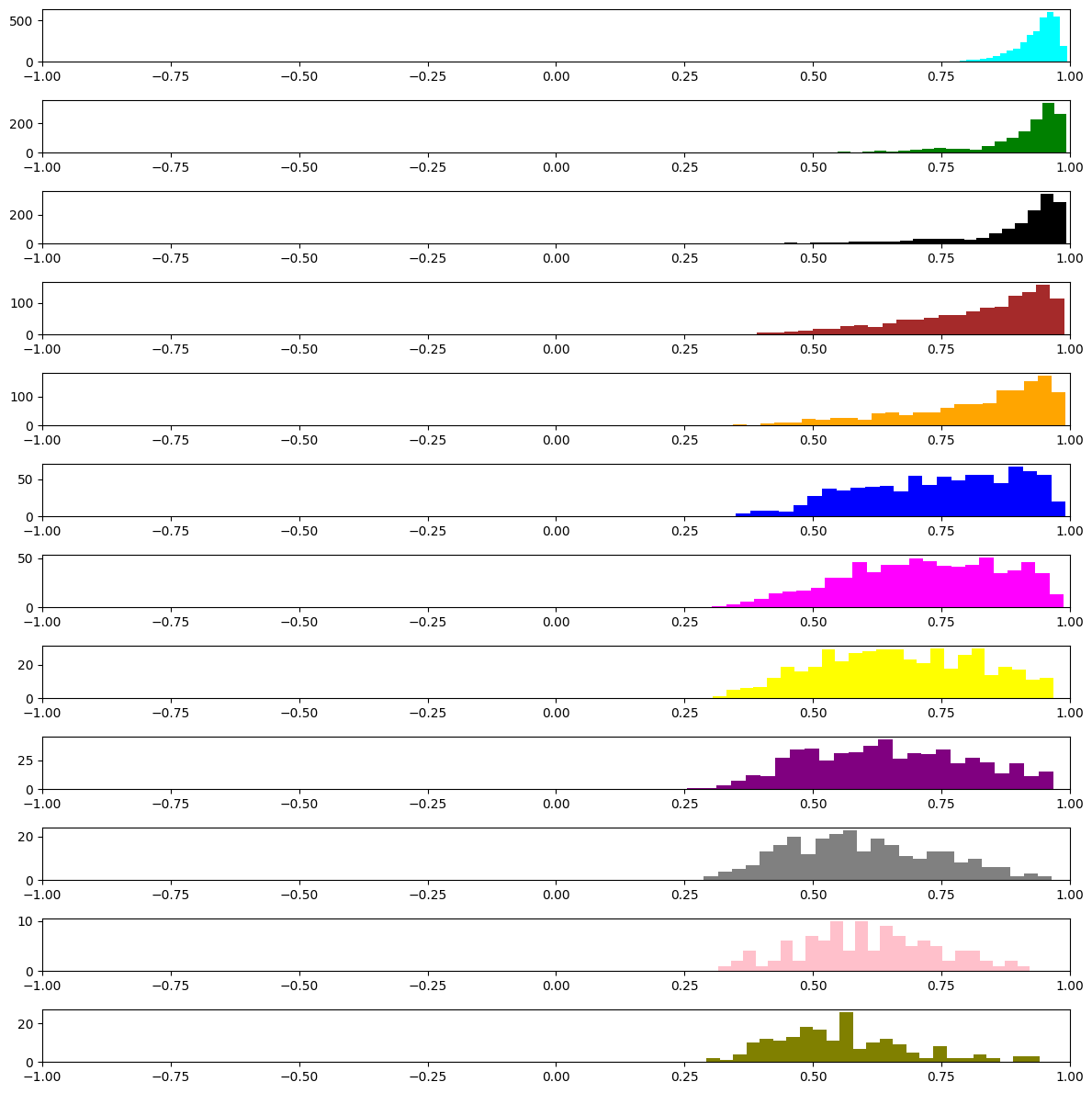}

}~~~~~~\subfloat[Class 7]{\includegraphics[scale=0.2]{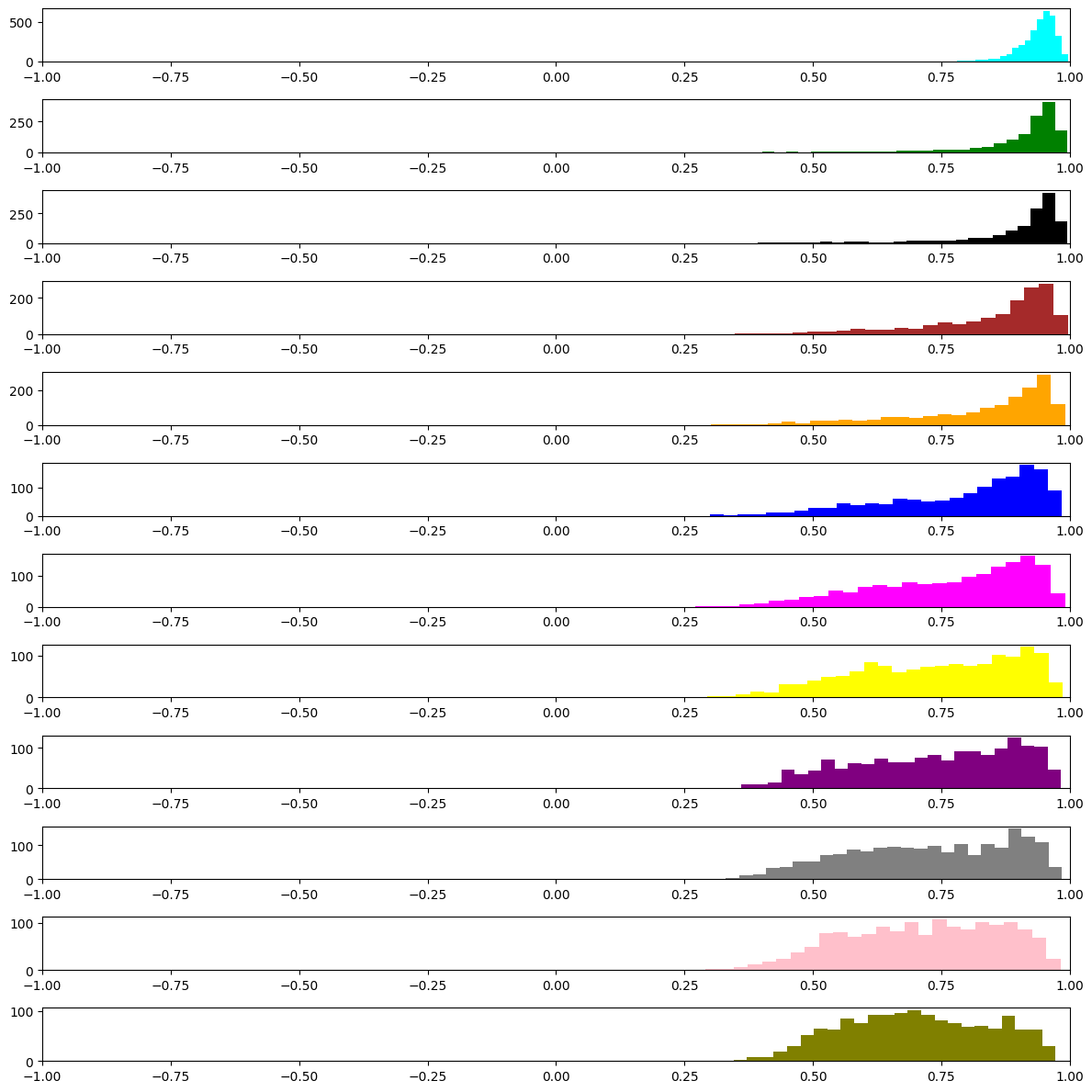}

}\\
\subfloat[Class 8]{\includegraphics[scale=0.2]{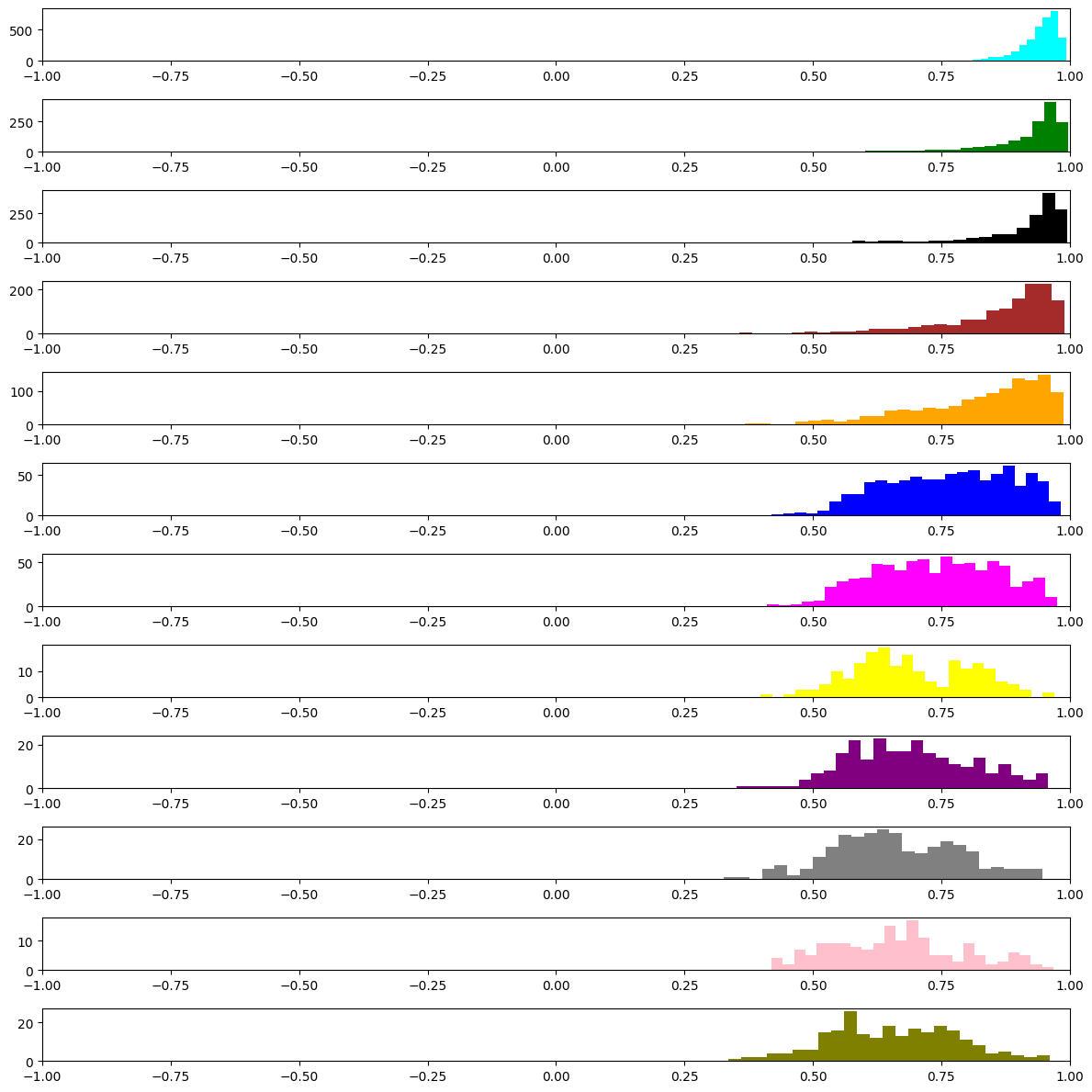}

}~~~~~~\subfloat[Class 9]{\includegraphics[scale=0.2]{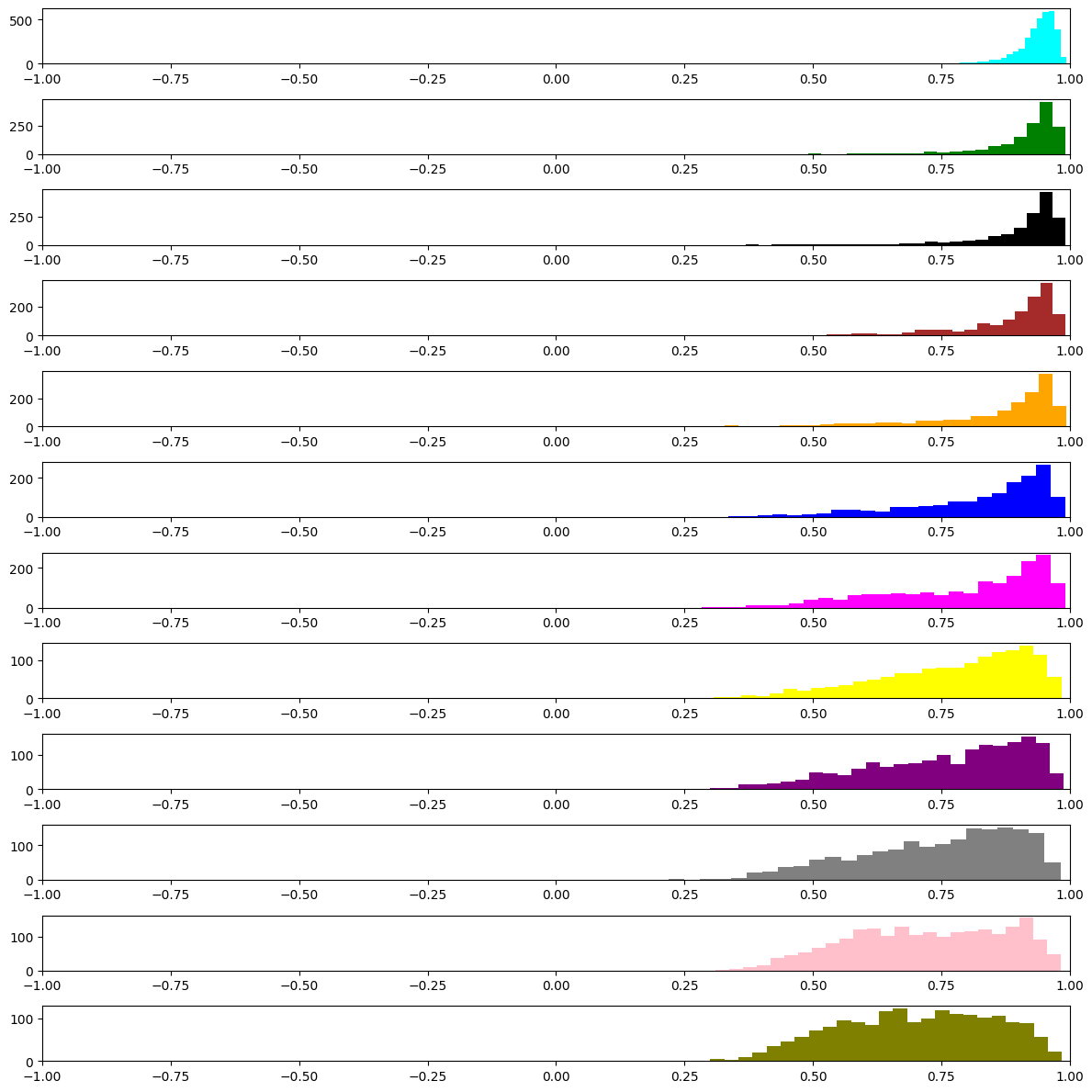}

}\caption{Class-wise (Class 6 to Class 9) TRUST score distribution of CIFAR-10
train (in cyan), test (in green) and test with noises (uniform noises
from 0\% (in black) to 9\% (in olive) based on PreactResNet18 model.}\label{fig:trust_score-1}
\end{figure*}

\subsection{TRUST score for dataset inspection}

The high score and low score samples from each class of the CIFAR-10 dataset are shown in Fig \ref{fig:Mode-(high-TRUST score)} and \ref{fig:Tail-or-rare}.
The high score samples represent some top modes from each class of
CIFAR-10 dataset and the low score contains samples that are rare
and wrongly labeled (label noise) from each class of CIFAR-10 dataset.
As shown in Fig \ref{fig:Mode-(high-TRUST score)} and \ref{fig:Tail-or-rare}
our proposed TRUST score efficiently picked up samples from each class.
The scatter plot of TRUST scores of CIFAR-10 test dataset is shown
in Fig \ref{fig:scatter}. In Fig \ref{fig:scatter},
lower TRUST scores represents samples in the tail region, and the higher
TRUST score represents samples in the main modes region of CIFAR-10
dataset.

\subsection{In understanding test data alignment}

The class-wise accuracy drop in percentage vs Maximum Mean Discrepancy
for original test, uniform noises from 1\% to 9\%, Gaussian noise,
Gaussian blur, and different brightness for CIFAR-10 PreactResNet18
model is shown in Fig \ref{fig:Accuracy-drop-vs_unif} and Fig \ref{fig:Accuracy-drop-vs_rest}. We also report
the SVHN values when we use it as OOD dataset for the CIFAR-10 PreactResNet18
model. The predicted class of a test set is used to compute its associated
perturbation.

The class-wise TRUST score distribution of CIFAR-10 train (in cyan),
test (in green) and test data with uniform noises (0\% (in black)
to 9\% (in olive)) for the CIFAR-10 PreactResNet18 model is shown
in Fig \ref{fig:trust_score} and Fig \ref{fig:trust_score-1}. It
is evident from the Fig \ref{fig:trust_score} and \ref{fig:trust_score-1}
that the distribution of TRUST score shifts farther away from the
original training data TRUST score distribution of CIFAR-10 dataset
with the addition of noises.
\subsection{AUSE plots of CIFAR-10 dataset}
The AUSE plots over all and per classes of CIFAR-10 test dataset for
CrossEntro+TRUST, LogitNorm, and  LogitNorm+TRUST
on PreactResNet18 model is shown in Fig \ref{fig:trust_score-2} and
Fig \ref{fig:trust_score-2-1}.
The AUSE value corresponding to Fig \ref{fig:trust_score-2} is
reported in Table 1 in the main paper.
\begin{figure*}
\centering
\subfloat[{\scriptsize CrossEntro+TRUST}]{\includegraphics[scale=0.22]{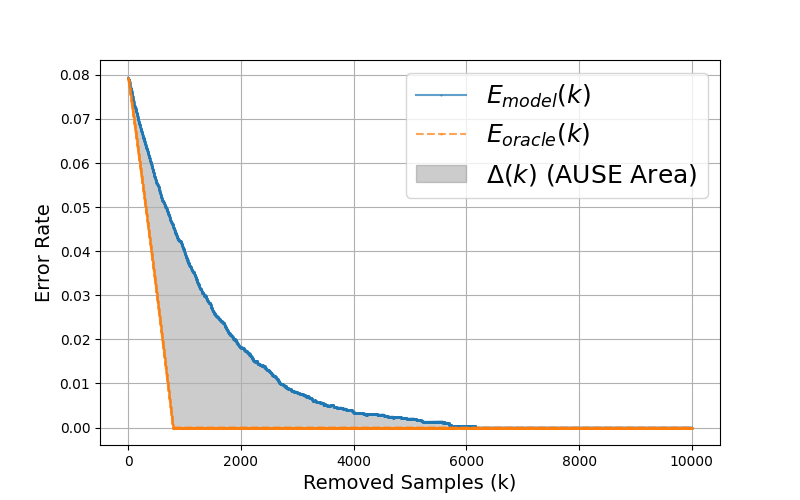}
}
\subfloat[LogitNorm]{\includegraphics[scale=0.22]{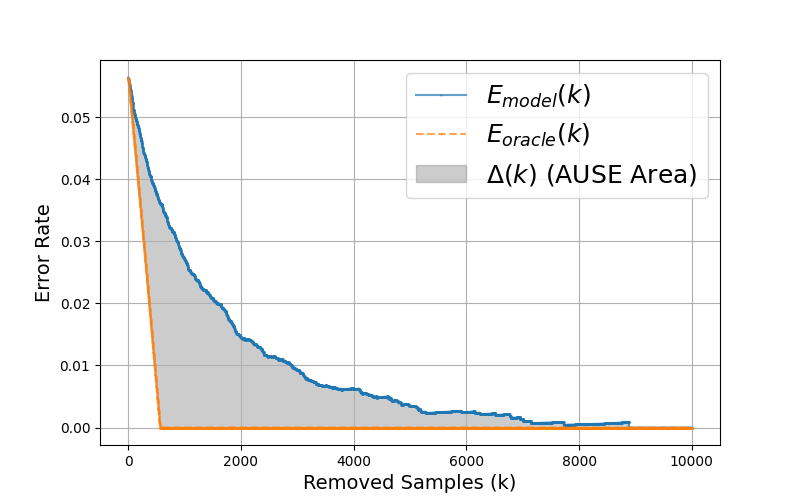}}~~~~~~\subfloat[{\scriptsize LogitNorm+TRUST}]{\includegraphics[scale=0.22]{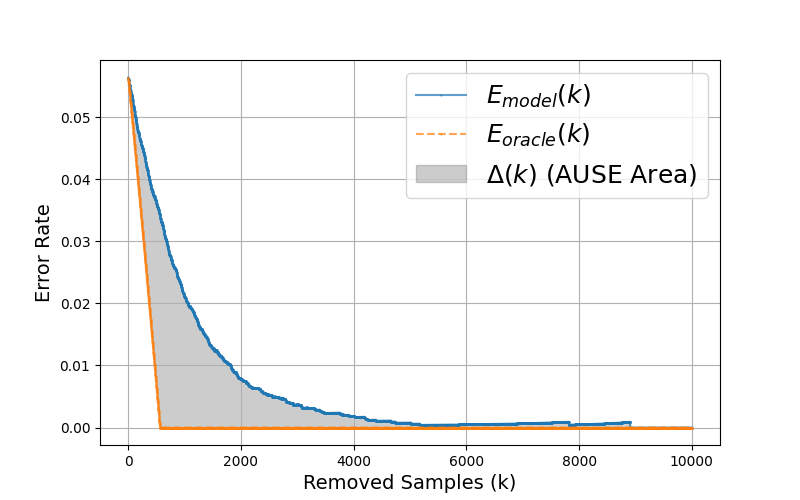}
}\caption{AUSE plots of CIFAR-10 test dataset on CrossEntro+TRUST,
LogitNorm, and LogitNorm+TRUST for PreactResNet18 model.}\label{fig:trust_score-2}
\end{figure*}
\begin{figure*}
\centering
\subfloat[{\scriptsize CrossEntro+TRUST}]{\includegraphics[scale=0.3]{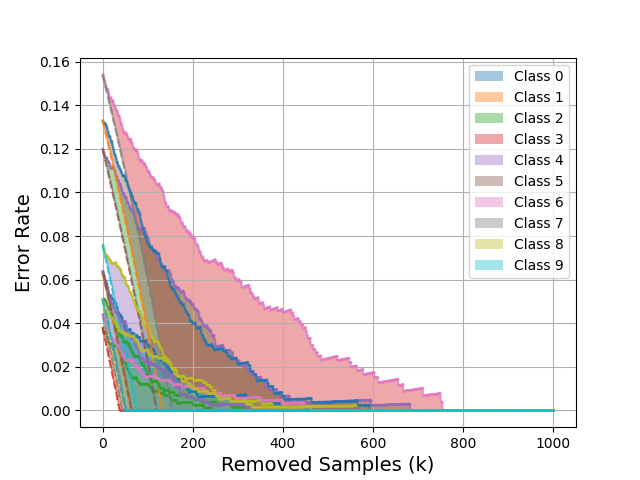}
}~\subfloat[LogitNorm]{\includegraphics[scale=0.3]{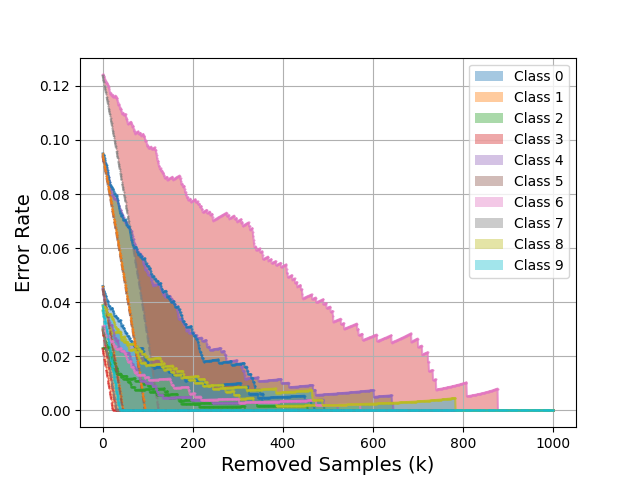}}~~~~~~\subfloat[{\scriptsize LogitNorm+TRUST}]{\includegraphics[scale=0.3]{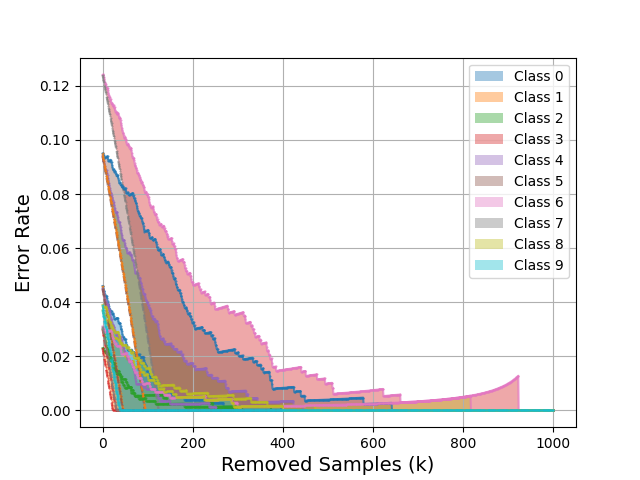}

}\caption{Class-wise AUSE plots of CIFAR-10 test dataset for CrossEntro+TRUST,
LogitNorm, and LogitNorm+TRUST for PreactResNet18 model.}\label{fig:trust_score-2-1}
\end{figure*}

\end{document}